\newcommand{\clip}{\operatorname{clip}}
\newcommand{\fev}{\texttt{fev}\xspace}
\newcommand{\bench}{\texttt{fev-bench}\xspace}
\newcommand{\minibench}{\texttt{fev-bench-mini}\xspace}
\crefname{equation}{Eq.}{Eqs.}
\Crefname{equation}{Eq.}{Eqs.}
\crefname{section}{Sec.}{Secs.}
\Crefname{section}{Sec.}{Secs.}
\def\eqref#1{equation~\ref{#1}}
\def\1{\bm{1}}
\def\vy{{\bm{y}}}
\DeclareMathAlphabet{\mathsfit}{\encodingdefault}{\sfdefault}{m}{sl}
\SetMathAlphabet{\mathsfit}{bold}{\encodingdefault}{\sfdefault}{bx}{n}
\def\gQ{{\mathcal{Q}}}
\newcommand{\R}{\mathbb{R}}
\theoremstyle{plain}
\newtheorem{theorem}{Theorem}[section]
\newtheorem{proposition}[theorem]{Proposition}
\theoremstyle{definition}
\theoremstyle{remark}
\title{\bench: A Realistic Benchmark for Time Series Forecasting}
\author{%
  Oleksandr Shchur $^{1}$ \thanks{Equal contribution} \\
  \And
  Abdul Fatir Ansari $^{1}$ \footnotemark[1] \\
  \And
  Caner Turkmen $^2$ \thanks{Work done while at Amazon}\\
  \And
  Lorenzo Stella $^{1}$\\
  \And
  Nick Erickson $^{3}$ \footnotemark[2]\\
  \And
  Pablo Guerron $^{4,5}$ \footnotemark[2]\\
  \And
  Michael Bohlke--Schneider $^1$ \\
  \And
  Yuyang Wang $^1$
  \And
  $^{1}$ \normalfont{AWS}
  $\quad$ $^2$ Keystone AI
  $\quad$ $^3$ Prior Labs
  $\quad$ $^4$ Amazon
  $\quad$ $^5$ Boston College
}
\begin{document}

\maketitle

\begin{abstract}
Benchmark quality is critical for meaningful evaluation and sustained progress in time series forecasting, particularly with the rise of time series foundation models (TSFMs).
Existing benchmarks often have limited domain coverage or overlook real-world settings such as tasks with covariates.
Their aggregation procedures frequently lack statistical rigor, making it unclear whether performance differences reflect true improvements or random variation.
Many benchmarks lack consistent evaluation infrastructure or are too rigid for integration into existing pipelines.
To address these gaps, we propose \bench, a benchmark of 100 forecasting tasks across seven domains, including 46 with covariates.
Supporting the benchmark, we introduce \fev, a lightweight Python library for forecasting evaluation emphasizing reproducibility and integration with existing workflows.
Using \fev, \bench employs principled aggregation with bootstrapped confidence intervals to report performance along two dimensions: win rates and skill scores.
We evaluate various models on \bench and find that existing TSFMs often miss accuracy by ignoring covariates, pointing to a clear direction for future work.

\end{abstract}

\section{Introduction}
\looseness=-1
Pretrained time series forecasting models are transforming forecasting practice.
They often outperform traditional methods \citep{aksu2024gift} while enabling zero-shot inference that simplifies production use and lowers the barrier to entry \citep{cohen2025toto}.

\looseness=-1
Advances in pretrained forecasting models are primarily evaluated on benchmarks, making benchmark quality critical for progress.
Shortcomings in existing benchmarks directly shape model development; for example, most general benchmarks ignore covariates despite their prevalence in real-world applications~\citep{bojer2021kaggle,arango2025chronosx}.
Consequently, most pretrained models lack covariate support, limiting their effectiveness in domains such as retail, where promotional and pricing data are essential for accurate demand forecasting \citep{fildes2022retail}.

Beyond task coverage, existing benchmarks often lack statistical rigor.
Most studies report single-number summaries, making it unclear whether improvements reflect true advances or random variation.
Small gains may vanish or even reverse under minor benchmark changes \citep{roque2025cherry}, undermining the reliability of conclusions about which models perform better.

\looseness=-1
Finally, benchmark infrastructure poses additional barriers to progress and reproducibility.
Many benchmarks provide only standalone datasets without evaluation code, leading to inconsistent implementations and incomparable results \citep{hewamalage2023forecast}.
When infrastructure exists, it often consists of monolithic systems that bundle models, datasets, and evaluation logic with extensive dependencies, becoming unmaintainable over time.
This rigidity prevents extension to new domains or integration into existing workflows, limiting practical utility and longevity.

To address above challenges, we make three contributions:
\begin{itemize}

\item \textbf{New benchmark.} We introduce \bench, a forecast evaluation benchmark containing 100 tasks spanning 7 real-world application domains.\footnote{\href{https://huggingface.co/spaces/autogluon/fev-bench}{\texttt{huggingface.co/spaces/autogluon/fev-bench}}}
Unlike existing forecasting benchmarks, \bench includes 46 tasks with covariates alongside univariate and multivariate settings. Our evaluation highlights the importance of covariates for forecasting accuracy, despite most pretrained forecasting models and benchmarks largely ignoring them.

\item \textbf{Aggregation methods.} In our benchmark, we employ principled aggregation strategies including bootstrap-based confidence intervals that quantify whether performance differences are statistically meaningful.
This approach enables more reliable model comparisons and assesses the robustness of conclusions to variations in benchmark composition.

\item \textbf{Evaluation package.} We introduce \fev,
\footnote{\href{https://github.com/autogluon/fev}{\texttt{github.com/autogluon/fev}}}
a lightweight Python package for forecasting evaluation that introduces minimal dependencies while remaining compatible with popular forecasting libraries. The package focuses on reproducibility and extensibility, enabling researchers to easily build and share new benchmarks and the corresponding results.

\end{itemize}

\begin{table}[t]
    \centering
    \resizebox{\textwidth}{!}{\begin{tabular}{lrrrrl}
\toprule
\textbf{Benchmark} & \textbf{\# datasets} & \textbf{\# tasks} & \textbf{\# tasks with covariates} & \textbf{\# multivariate tasks} & \textbf{Forecast type} \\
\midrule
Monash \citep{godahewa2021monash} & 42 & 42 & 0 & 0 & point \\
LTSF \citep{zeng2023transformers} & 9 & 36 & 0 & 9 & point \\
TFB \citep{qiu2024tfb} & 41 & 116 & 0 & 25 & point \\
BasicTS+ \citep{shao2024exploring} & 20 & 40 & 0 & 20 & point \\
ProbTS \citep{zhang2024probts} & 18 & 18 & 0 & 14 & point \& quantile \\
TSFM-Bench \citep{li2025tsfm} & 21 & 21 & 0 & 21 & point \\
TIME \citep{qiao2026time} & 50 & 50 & 0 & 36 & point \& quantile \\
GIFT-Eval \citep{aksu2024gift} & 55 & 97 & 0 & 43 & point \& quantile \\
\bench (this work) & 96 & 100 & 46 & 35 & point \& quantile \\
\bottomrule
\end{tabular}
}
    \caption{Overview of general time series forecasting benchmarks. \bench includes more unique datasets than prior benchmarks and is the first to include 46 tasks with covariates, addressing a major gap in existing evaluation frameworks.\vspace{-5mm}}
\label{tab:bench_stats}
\end{table}

\section{Preliminaries}

\textbf{Problem definition.}
The multivariate time series forecasting problem can be formally stated as follows.
We are given a collection $\{ \vy_{n,1:T} \}_{n=1}^N$
of $N$ multivariate time series.
For $n = 1,\dots,N$ and $t = 1,\dots,T$, let
$\vy_{n,t} = (y_{n,d,t})_{d=1}^D \in \mathbb{R}^D$
denote the $D$-dimensional observation vector for series $n$ at time $t$,
where $D = 1$ corresponds to univariate forecasting.
The goal is to predict the future $H$ values
$\vy_{n,T+1:T+H}$
for each series $n$, where $H$ is the forecast horizon.
Each time series may be accompanied by covariates
\smash{$\mathbf{X}_{n,1:T+H}$},
including (i) \emph{static covariates} that do not vary over time (e.g., location),
(ii) \emph{past-only dynamic covariates} observed up to time $T$ (e.g., past related series),
and (iii) \emph{known dynamic covariates} available for all time steps $1,\dots,T+H$ (e.g., holiday indicators).

The goal of probabilistic forecasting is to model the distribution
$\smash{p\big(\vy_{n,T+1:T+H} \mid \vy_{n,1:T}, \mathbf{X}_{n,1:T+H}\big)}$.
While full distributional modeling provides the richest information, it is common in practice to produce \emph{point forecasts} such as conditional means or medians.
Alternatively, many applications estimate \emph{predictive quantiles}
$\smash{p(y_{n,d,t} \mid \vy_{n,1:T}, \mathbf{X}_{n,1:T+H})}$.
We denote by $\gQ \subset (0,1)$ the set of quantile levels and produce forecasts
$\smash{\hat{y}^{(q)}_{n,d,t}}$ satisfying
$\smash{\Pr(y_{n,d,t} \le \hat{y}^{(q)}_{n,d,t}) = q}$ for all $q \in \gQ$.

\textbf{Benchmarks and tasks.}
A benchmark consists of forecasting tasks together with an evaluation and aggregation procedure. A task specifies a dataset and the parameters defining how forecasts are produced and evaluated, including the forecast horizon $H$, evaluation cutoffs, target and covariate columns, and evaluation metric. A single dataset can yield multiple tasks by varying these parameters. Since different metrics correspond to different optimal forecasts, combining conflicting metrics within a task creates ambiguity about the intended objective \citep{kolassa2020best}.

Each task is evaluated using a rolling-origin protocol with $W$ windows \citep{hyndman2018forecasting}. Let $\tau_1 < \tau_2 < \dots < \tau_W$ denote the evaluation cutoffs. At each window $w$, the model receives observations up to $\tau_w$ and produces $H$-step forecasts, yielding $W$ forecast--target pairs per task. This setup mimics real-world deployment and improves robustness, especially for datasets with few series.

\textbf{Aggregation.}
While tasks define individual evaluation problems, benchmarks must aggregate results across tasks to answer questions such as ``Is model A more accurate than model B overall?'' The choice of aggregation directly affects the reliability and interpretability of benchmark results.

\begin{table}[t]
\centering
\begin{minipage}{0.48\columnwidth}
    \centering
    \resizebox{\linewidth}{!}{\begin{tabular}{lrrrrrrr}
\toprule
\textbf{Benchmark} & \textbf{Energy} & \textbf{Nature} & \textbf{Cloud} & \textbf{Mobility} & \textbf{Econ} & \textbf{Health} & \textbf{Retail} \\
\midrule
TIME & 4 & 16 & 3 & 4 & 14 & 3 & 6 \\
GIFT-Eval & 16 & 9 & 8 & 7 & 6 & 5 & 4 \\
\bench & 26 & 5 & 20 & 7 & 10 & 8 & 20 \\
\bottomrule
\end{tabular}
}
    \caption{Number of datasets from different domains in GIFT-Eval, TIME and \bench.}
    \label{tab:dataset-domains}
\end{minipage}
\hfill
\begin{minipage}{0.48\columnwidth}
    \centering
    \resizebox{\linewidth}{!}{\begin{tabular}{lrrrrrrrrrrrr}
\toprule
\textbf{Benchmark} & \textbf{10S} & \textbf{T} & \textbf{5T} & \textbf{10T} & \textbf{15T} & \textbf{30T} & \textbf{H} & \textbf{D} & \textbf{W} & \textbf{M} & \textbf{Q} & \textbf{Y} \\
\midrule
TIME & 0 & 0 & 7 & 1 & 7 & 1 & 8 & 9 & 4 & 9 & 2 & 0 \\
GIFT-Eval & 2 & 0 & 4 & 2 & 4 & 0 & 13 & 15 & 8 & 5 & 1 & 1 \\
\bench & 0 & 6 & 7 & 2 & 5 & 4 & 22 & 19 & 16 & 7 & 4 & 4 \\
\bottomrule
\end{tabular}
}
    \caption{Number of datasets with different frequencies in GIFT-Eval, TIME, and \bench.}
    \label{tab:dataset-freqs}
\end{minipage}
\end{table}

\section{Task definitions}
\label{sec:tasks}

We introduce \bench (\textbf{F}orecast \textbf{EV}aluation Benchmark), a benchmark designed to address limitations of existing forecasting evaluation frameworks. Unlike prior benchmarks with narrow domain coverage or limited evaluation infrastructure, \bench enables reproducible and statistically sound evaluation across diverse real-world forecasting applications. \bench consists of 100 forecasting tasks, with full specifications in \cref{app:tasks}. This section focuses on dataset and task design; aggregation methods and the evaluation package are described in \cref{sec:aggregation} and \cref{sec:code}.

\subsection{Datasets and tasks}
Our goal is to construct a \textit{general} benchmark representative of real-world forecasting applications across domains, frequencies, horizons, and time series characteristics.
We consider both univariate and multivariate forecasting problems, with covariates covering dynamic (both past-only and known) and static variables.
We evaluate both point and probabilistic forecasting performance.

\looseness=-1
\textbf{Datasets.}
We source datasets from established collections including the Monash repository~\citep{godahewa2021monash}, GIFT-Eval~\citep{aksu2024gift}, and BOOM~\citep{cohen2025toto}. Since these collections lack covariates, we additionally include public datasets from Kaggle~\citep{bojer2021kaggle} and domain-specific repositories~\citep{wang2023benchmarks,lago2021forecasting,arango2025chronosx}.
This process yields 96 unique datasets and 100 forecasting tasks with different target and covariate selections. Among them, 30 tasks include known dynamic covariates, 24 past dynamic covariates, and 19 static covariates. These categories are non-exclusive, and both univariate and multivariate tasks may include covariates. Tables~\ref{tab:dataset-domains} and \ref{tab:dataset-freqs} compare \bench against GIFT-Eval \citep{aksu2024gift} and TIME \citep{qiao2026time}, showing substantially broader coverage across domains and frequencies.

\textbf{Forecast horizons.}
Many existing benchmarks reuse the same datasets with different horizons \citep{zeng2023transformers,aksu2024gift}, creating correlated tasks with limited additional insight. While such setups are useful for studying sensitivity to forecast horizon, \bench instead prioritizes diversity across datasets and application domains. We therefore avoid horizon duplication within datasets and select horizons that reflect realistic forecasting needs, such as 168 steps for hourly energy demand or 30 steps for daily retail sales.

\looseness=-1
\textbf{Rolling evaluation.}
We use rolling window evaluation to balance computational cost and statistical reliability. Depending on dataset size, we use up to 20 windows for datasets with fewer than 10 series, up to 10 for datasets with 10--2000 series, and 1 for larger datasets, additionally requiring at least $(2 \times H + 1)$ past observations before the first window. Results are averaged across windows.

\textbf{Representative subset of tasks.}
In addition to the full 100-task \bench benchmark, we provide \minibench, a curated subset of 20 tasks (\cref{app:mini-bench}). The subset preserves the diversity of covariates, dimensionalities, domains, and horizons in the full benchmark while enabling faster iteration at lower computational cost. \minibench also approximates the relative model rankings of the full benchmark, making it suitable for model development and ablation studies.

\subsection{Evaluation metrics}
Each of the 100 benchmark tasks evaluates point and probabilistic forecast accuracy using complementary metrics.
We evaluate point forecast accuracy using Mean Absolute Scaled Error (MASE), following existing benchmarks \citep{aksu2024gift,godahewa2021monash}.
\begin{align}
    \operatorname{MASE} = \frac{1}{NDH} \sum_{n=1}^{N} \sum_{d=1}^{D} \frac{1}{a_{n,d}} \sum_{t=T+1}^{T+H} |y_{n,d,t} - \hat{y}_{n,d,t}|,
\end{align}
where the error for each series $n$ and dimension $d$ is normalized by the historical seasonal error $\smash{a_{n,d} = \frac{1}{T-m} \sum_{t=m+1}^T |y_{n,d,t} - y_{n,d,t-m}|}$.
Here $\hat{y}_{n,d,t}$ is the point forecast and $m$ is the seasonal period based on the data frequency (e.g., $m{=}12$ for monthly data).
MASE offers several advantages: it is scale-free, balances contributions across series with different magnitudes, handles trends well, and remains robust when the forecast horizon contains zeros \citep{hyndman2006another}.

We use Scaled Quantile Loss (SQL) computed on levels $\gQ = \{0.1, 0.2, ..., 0.9\}$ for probabilistic forecast accuracy:
\begin{align}
    \operatorname{SQL} = \frac{1}{NDH} \sum_{n=1}^{N} \sum_{d=1}^{D} \frac{1}{a_{n,d}} \sum_{t=T+1}^{T+H} \sum_{q \in \gQ}  \rho_q(y_{n,d,t}, \hat{y}^{(q)}_{n,d,t})
\end{align}
where $\hat{y}^{(q)}_{n,d,t}$ is the quantile forecast at level $q$, $a_{n,d}$ is the historical seasonal error as in MASE, and $\rho_q(y, \hat{y}) = 2\max\{q(y-\hat{y}), (1-q)(\hat{y}-y)\}$ is the quantile loss.

\looseness=-1
We adopt Scaled Quantile Loss (SQL) as the primary probabilistic metric because it is the natural extension of MASE and inherits its scale-independence properties. Although forecasting benchmarks more commonly use the scale-dependent Weighted Quantile Loss (WQL) \citep{ansari2024chronos,aksu2024gift}, both SQL and WQL are related to CRPS \citep{gneiting2007strictly}, Winkler Score, and Weighted Interval Score \citep{tibshirani2023forecast}. The key difference is that SQL normalizes each series by scale, whereas WQL aggregates errors in a scale-dependent manner, analogous to the distinction between MASE and Weighted Absolute Percentage Error (WAPE). Our choice follows the rationale of the M4 and M5 competitions, which also used SQL-equivalent metrics \citep{makridakis2020m4,makridakis2022m5}.

Both MASE and SQL can encounter numerical issues with intermittent time series where the seasonal error $a_{n,d}$ approaches zero \citep{hewamalage2023forecast}.
We verified that this problem does not occur in our benchmark tasks.
For completeness, we also report WQL and WAPE scores in the supplement.

\section{Aggregating the results}
\label{sec:aggregation}

After evaluating $M$ models on $R$ tasks, we obtain the error matrix $E \in \mathbb{R}_{\ge 0}^{R \times M}$, where $E_{rj}$ denotes the error (e.g., MASE) of model $j$ averaged over all evaluation windows of task $r$.
Lower values correspond to more accurate forecasts.
For failed task-model pairs (e.g., due to timeouts or crashes), we substitute $E_{rj}$ with $E_{r\beta}$, where $\beta$ denotes a predefined baseline model (Seasonal Naive).

\subsection{Marginal performance}
\label{sec:aggregation-marginal}
The primary goal of any benchmark is to rank models by average performance.
We use two complementary aggregation methods that capture different aspects of model quality.

\textbf{Average win rate} $W_j$ represents the probability that model $j$ achieves lower error than another randomly chosen model $k \ne j$ on a randomly chosen task:
\begin{align}
\label{eq:avg-winrate}
    W_j = \frac{1}{R(M-1)} \sum_{r=1}^{R} \sum_{k=1, k \neq j}^{M}
       \Big(\mathds{1}(E_{rj} < E_{rk}) + \frac{1}{2} \cdot \mathds{1}(E_{rj} = E_{rk})\Big).
\end{align}
Here $\mathds{1}(\cdot)$ is the binary indicator function.
Ties ($E_{rj} = E_{rk}$) are treated as half-wins for each model. The win rate ranges from 0 (worst) to 1 (best) and provides an intuitive measure of relative model performance.
However, win rate has two limitations: it is insensitive to the magnitude of performance differences and changes as new models are added to the benchmark, motivating our second aggregation method.

\textbf{Skill score} \citep{hyndman2018forecasting} $S_j$ quantifies how much model $j$ reduces forecasting error compared to the fixed baseline model $\beta$ on average:
\begin{align}
\label{eq:avg-skillscore}
   S_j = 1 - \exp\left(\frac{1}{R}\sum_{r=1}^{R} \log \left(\clip\left(E_{rj}/E_{r\beta}; \ell, u\right) \right)\right).
\end{align}
where $\clip(x; \ell, u) = \max(\ell, \min(x, u))$ clips $x$ to the interval $[\ell, u]$.
We aggregate relative errors across tasks using geometric mean, clipping values between $\ell=10^{-2}$ and $u=100$ to avoid excessive influence from extreme values.
The skill score ranges from 1 (perfect forecasts) to $-\infty$ (arbitrarily poor performance).
Positive values indicate that the model outperforms the baseline on average, while negative values indicate underperformance.

Geometric mean aggregation is less sensitive to outliers than the arithmetic mean and ensures that the final ranking remains invariant to the choice of baseline model \citep{fleming1986not}.
The geometric mean appropriately handles the multiplicative nature of relative performance comparisons, averaging ratios in a meaningful way where opposing relative errors like
$\frac{1}{2}$ and $2$ cancel out.

\subsection{Pairwise comparison}
While marginal performance provides overall rankings, pairwise comparisons reveal specific model relationships that may be obscured in aggregate statistics. The above
aggregation methods can be easily generalized to comparing any two models $j$ and $k$.

\textbf{Pairwise win rate} $W_{jk}$ represents the fraction of tasks where model $j$ outperforms model $k$
\begin{align}
\label{eq:pairwise-winrate}
W_{jk} = \frac{1}{R} \sum_{r=1}^{R} \left(\mathds{1}(E_{rj} < E_{rk}) + \frac{1}{2}\cdot \mathds{1}(E_{rj} = E_{rk})\right).
\end{align}

\textbf{Pairwise skill score} $S_{jk}$ quantifies how much model $j$ reduces error compared to model $k$ on average
\begin{align}
\label{eq:pairwise-skillscore}
   S_{jk} = 1 - \exp\left(\frac{1}{R}\sum_{r=1}^{R} \log \left(\clip\left(E_{rj}/E_{rk}; \ell, u\right) \right)\right).
\end{align}

\subsection{Significance of performance differences}
A critical concern in benchmarking is the reliability of reported performance differences.
State-of-the-art claims often rest on minor improvements that may vanish under small changes to benchmark composition, casting doubt on whether they reflect genuine advances \citep{roque2025cherry}.

\looseness=-1
To address this concern, we compute 95\% confidence intervals using paired bootstrap over tasks \citep{efron1992bootstrap}.
We generate $B = 1000$ bootstrap samples by drawing rows with replacement from $E$, where each $\tilde{E}^{(b)} \in \mathbb{R}_{\ge 0}^{R \times M}$ contains $R$ tasks sampled from the original benchmark.
For each $\tilde{E}^{(b)}$, we compute the aggregate statistics to obtain bootstrap distributions such as $\{\tilde{W}_{jk}^{(b)}\}_b$.
The $(1-\alpha)$ confidence interval for the pairwise win rate $W_{jk}$ is then
\begin{align}
\left[ Q_{\nicefrac{\alpha}{2}}\!\left(\{\tilde{W}_{jk}^{(b)}\}_b\right), \; Q_{1-\nicefrac{\alpha}{2}}\!\left(\{\tilde{W}_{jk}^{(b)}\}_b\right) \right],
\end{align}
where $Q_p(\cdot)$ denotes the empirical $p$-th quantile of the bootstrap distribution.
Analogous intervals are constructed for the pairwise skill scores $S_{jk}$. These intervals quantify how conclusions about model comparisons vary under alternative benchmark compositions.

We report bootstrap confidence intervals only for the pairwise statistics ($W_{jk}, S_{jk}$), as these directly answer the question of interest: ``Does model $j$ consistently outperform model $k$ under different benchmark compositions?''
Equivalently, the intervals define paired null-hypothesis tests over tasks, with null hypotheses $W_{jk}=0.5$ (no systematic advantage) and $S_{jk}=0$ (no average error reduction).
Confidence intervals for the marginal statistics ($W_j, S_j$) instead describe the variability of each model’s average score in isolation and ignore correlations between models.

\textbf{Summary.}
Benchmark interpretation proceeds in two steps.
First, the marginal statistics ($W_j, S_j$) provide an overall model ranking.
Second, the pairwise statistics with confidence intervals ($W_{jk}, S_{jk}$) refine this picture by showing which performance differences are robust to changes in benchmark composition.
For example, if a model $j$ ranks highest by marginal win rate $W_j$ and all of its pairwise win rates $W_{jk}$ against other models $k \neq j$ have lower bounds above 50\%, then model $j$ can be regarded as outperforming every competitor with high confidence.

\section{Software package}
\label{sec:code}

Comprehensive task coverage and principled evaluation are essential, but standardized infrastructure is equally important for benchmark relevance and reproducibility. This includes code for task definition, evaluation, and result aggregation.

\textbf{Motivation.}
Existing forecasting benchmarks typically fall into two categories: standalone datasets without supporting infrastructure~\citep{godahewa2021monash,zeng2023transformers}, and end-to-end systems that bundle models, datasets, and forecasting tasks~\citep{qiu2024tfb,aksu2024gift}.
Dataset-only benchmarks provide no guarantee that results are comparable across users, since evaluation may differ in horizons, cutoffs, metrics, or aggregation strategies even on the same dataset.
End-to-end systems reduce this ambiguity but are often rigid and difficult to extend or integrate into existing workflows. They impose many dependencies and assumptions, while model implementations tend to become outdated over time. Version constraints on libraries such as \texttt{torch}, \texttt{numpy}, and \texttt{pandas} can further reduce transparency; for example, changes in \texttt{pandas} may alter frequency inference, seasonal periods $m$, and thus MASE computation.

\textbf{\fev Python library.}
To address these limitations, we introduce \fev, a lightweight library that provides essential benchmarking functionality without unnecessary constraints.
Its core features include task definition, data loading and splitting, prediction scoring, and result aggregation.
\fev depends only on Hugging Face \texttt{datasets}~\citep{lhoest2021datasets} and \texttt{pydantic}~\citep{colvin2025pydantic} for input validation, and does not fix versions of commonly used packages such as \texttt{torch} or \texttt{numpy}.
This enables seamless integration into existing pipelines.
Using \texttt{datasets} also simplifies data loading and supports extensions such as multimodal forecasting.
The \texttt{fev} code is open sourced under the Apache-2.0 license at \url{https://github.com/autogluon/fev}.

\fev deliberately excludes model implementations, which are prone to becoming outdated.
Instead, it provides adapters that convert data into formats compatible with popular forecasting libraries including GluonTS~\citep{alexandrov2020gluonts}, darts~\citep{herzen23darts}, StatsForecast~\citep{garza2022statsforecast}, AutoGluon~\citep{shchur2023autogluon}, and sktime~\citep{loning2019sktime}.

\textbf{Library API}. The \fev library is built around three main constructs.
An \texttt{EvaluationWindow} is a single train–test split defined by a cutoff.
A \texttt{Task} specifies a forecasting problem, including the dataset, horizon, covariates, targets, and evaluation metric, and may include multiple evaluation windows.
Finally, a \texttt{Benchmark} is a collection of tasks.
Benchmarks in \fev can be defined using YAML files.
Each task produces an evaluation summary that includes both metric values and the full task specification, ensuring unambiguous and comparable results.
\fev also provides utilities for aggregating results across tasks, as described in \cref{sec:aggregation}.

\section{Related work}
\label{sec:related-work}
\subsection{Existing benchmarks}

Early deep learning approaches for time series forecasting~\citep{lai2018modeling,salinas2020deepar,rangapuram2018deep} evaluated on small collections of datasets that varied across studies. The M4~\citep{makridakis2020m4} and M5~\citep{makridakis2022m5} competitions later became widely used evaluation tasks. The LTSF benchmark~\citep{zhou2021informer,zeng2023transformers} introduced new datasets with a focus on long-horizon forecasting, but remained narrow in scope and was later criticized for over-representing similar datasets and unrealistic evaluation setups~\citep{hewamalage2023forecast}.

Subsequent benchmarks, including the Monash repository~\citep{godahewa2021monash}, BasicTS+~\citep{shao2024exploring}, TFB~\citep{qiu2024tfb}, ProbTS~\citep{zhang2024probts}, TSFM-bench~\citep{li2025tsfm}, Chronos Benchmarks~\citep{ansari2024chronos}, TIME~\citep{qiao2026time}, and GIFT-Eval~\citep{aksu2024gift}, broadened evaluation across domains, frequencies, and forecasting settings, while domain-specific benchmarks such as BOOM~\citep{cohen2025toto} focused on particular applications. However, these benchmarks largely ignore covariates despite their practical importance.

\looseness=-1
In contrast, \bench provides broad coverage across domains, frequencies, and forecasting settings, including 46 tasks with covariates. Complementing this, the \fev library provides lightweight evaluation infrastructure with standardized aggregation and confidence intervals for reproducible and statistically robust comparisons.

\subsection{Aggregation strategies}
\label{sec:related-work-aggregation}
Several aggregation methods have been proposed for forecasting benchmarks, each with different trade-offs.

\textbf{Average rank} is widely used in forecasting benchmarks
\citep{aksu2024gift,ansari2024chronos}.
As shown in \cref{app:avg-rank-avg-winrate}, average rank is mathematically equivalent to the average win rate $W_j$, inducing the same model ordering. However, ranks scale with the number of models and do not naturally extend to pairwise comparisons. We therefore prefer win rates, which are bounded between $0$ and $1$ and extend directly to pairwise evaluation.

\textbf{Bradley--Terry (Elo) scores} \citep{bradley1952rank} have been applied across domains, from large language models \citep{chiang2024chatbot} to tabular benchmarks \citep{erickson2025tabarena}. As shown in \cref{app:avg-winrate-elo}, when all models are compared on all tasks, Bradley--Terry scores induce the same ranking as average win rates. Thus, average rank, average win rate, and Bradley--Terry scores are equivalent under our setup (\cref{sec:aggregation}), and we report win rates for simplicity.

\looseness=-1
\textbf{Nemenyi post-hoc tests} with critical difference diagrams \citep{demvsar2006statistical} are also rank-based, relying on average ranks. While they control family-wise error rates, they are often conservative and provide only binary significance decisions without effect sizes \citep{garcia2008extension}. In contrast, confidence intervals on win rates directly convey the magnitude and uncertainty of performance differences.

\textbf{Geometric mean relative error (GMRE)}~\citep{ansari2024chronos,aksu2024gift} yields rankings identical to skill scores since $\operatorname{GMRE}_j = 1 - S_j$. We adopt this approach with two modifications: clipping extreme values to reduce outlier influence and reporting its complement (the skill score) to maintain a consistent “higher-is-better” interpretation.

\begin{table*}[t]
    \centering
    \resizebox{0.85\textwidth}{!}{\begin{tabular}{lrrrrr}
\toprule
\textbf{Model} & \textbf{Avg. win rate (\%)} & \textbf{Skill score (\%)} & \textbf{Median runtime / 100 series (s)} & \textbf{Leakage (\%)} & \textbf{\# failures} \\
\midrule
Chronos-2 & 91.4 & 47.3 & 0.8 & 0 & 0 \\
TiRex & 83.3 & 42.6 & 0.2 & 1 & 0 \\
TimesFM-2.5 & 79.6 & 42.2 & 1.9 & 10 & 0 \\
FlowState & 73.3 & 39.9 & 2.3 & 8 & 0 \\
Toto-1.0 & 72.4 & 40.8 & 22.3 & 8 & 0 \\
TabPFN-TS & 69.7 & 41.5 & 109.4 & 0 & 1 \\
Moirai-2.0 & 67.4 & 39.3 & 0.4 & 28 & 0 \\
Chronos-Bolt & 67.0 & 38.9 & 0.3 & 0 & 0 \\
TFT & 51.4 & 32.2 & 828.8 & 0 & 0 \\
Stat. Ensemble & 45.5 & 20.7 & 146.9 & 0 & 4 \\
CatBoost & 34.4 & 23.0 & 29.5 & 0 & 0 \\
Seasonal Naive & 19.6 & 0.0 & 0.5 & 0 & 0 \\
\bottomrule
\end{tabular}
}
    \caption{Marginal probabilistic forecasting performance of select models (according to the SQL metric) on the full \bench benchmark. Full results are available in \cref{app:extra-results}.\vspace{-3mm}}
    \label{tab:lb-sql-subset}
\end{table*}

\begin{table*}[t]
    \centering
    \resizebox{0.85\textwidth}{!}{\begin{tabular}{lrrrrr}
\toprule
\textbf{Model} & \textbf{Avg. win rate (\%)} & \textbf{Skill score (\%)} & \textbf{Median runtime / 100 series (s)} & \textbf{Leakage (\%)} & \textbf{\# failures} \\
\midrule
Chronos-2 & 87.2 & 35.5 & 0.8 & 0 & 0 \\
TiRex & 77.0 & 30.0 & 0.2 & 1 & 0 \\
TimesFM-2.5 & 76.1 & 30.2 & 1.9 & 10 & 0 \\
FlowState & 70.5 & 27.8 & 2.3 & 8 & 0 \\
Toto-1.0 & 66.6 & 28.2 & 22.3 & 8 & 0 \\
TabPFN-TS & 63.9 & 29.8 & 109.4 & 0 & 1 \\
Moirai-2.0 & 62.4 & 27.2 & 0.4 & 28 & 0 \\
Chronos-Bolt & 61.3 & 26.5 & 0.3 & 0 & 0 \\
CatBoost & 52.4 & 23.7 & 29.5 & 0 & 0 \\
Stat. Ensemble & 47.7 & 16.4 & 146.9 & 0 & 4 \\
TFT & 45.7 & 20.5 & 828.8 & 0 & 0 \\
Seasonal Naive & 20.0 & 0.0 & 0.5 & 0 & 0 \\
\bottomrule
\end{tabular}
}
    \caption{Marginal point forecasting performance of select models (according to the MASE metric) on the full \bench benchmark. Full results are available in \cref{app:extra-results}.\vspace{-3mm}}
    \label{tab:lb-mase-subset}
\end{table*}

\section{Results}
\label{sec:results}

\begin{figure}
\centering
\subfloat[]{\includegraphics[width=0.52\textwidth]{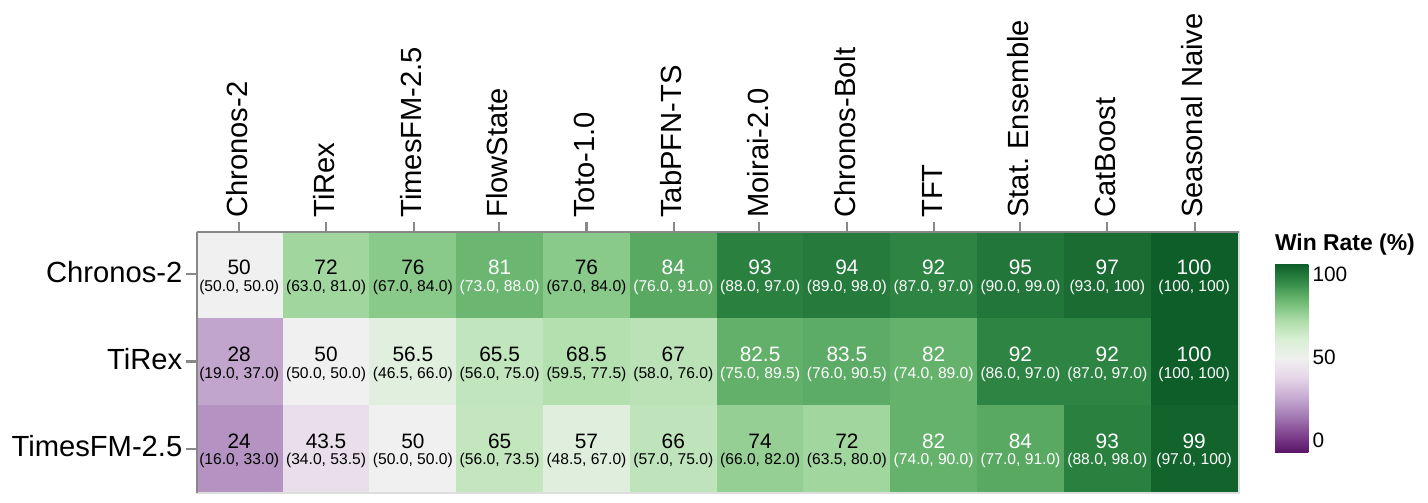}\label{fig:top3-win-rate-pairwise}}
\subfloat[]{\includegraphics[width=0.52\textwidth]{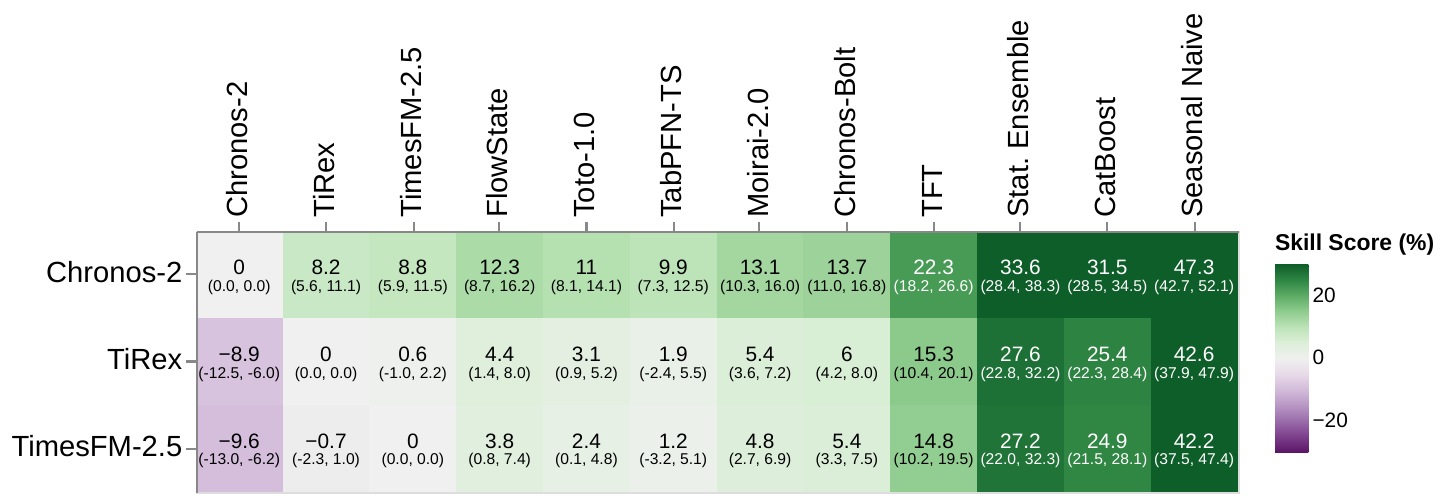}\label{fig:top3-skill-score-pairwise}}

\caption{Pairwise win rates (a) and skill scores (b) of the top-3 models against other models under the SQL metric on \bench, with 95\% confidence intervals obtained via bootstrapping. Higher values are better. Full pairwise results are available in \cref{app:extra-results}. Best viewed on screen.}
\end{figure}

\textbf{Models.} Our evaluation focuses primarily on pretrained forecasting models that represent the current frontier in time series forecasting research.
We select models based on three criteria: strong performance on existing benchmarks such as GIFT-Eval, publicly available implementations, and computational feasibility on consumer hardware (single NVIDIA A10G GPU with 24GB RAM).
More details about the model configuration are provided in \cref{app:models}.

We evaluate nine pretrained models: Chronos-2 \citep{ansari2025chronos2}, TimesFM-2.5 \citep{das2024decoder}, TiRex \citep{auer2025tirex}, FlowState \citep{graf2025flowstate}, Chronos-Bolt \citep{ansari2024chronos}, Toto \citep{cohen2025toto}, Moirai-2.0 \citep{woo2024unified}, TabPFN-TS \citep{hoo2025tables}, and Sundial \citep{liu2025sundial}. Chronos-2 and Toto-1.0 natively support multivariate forecasting ($D > 1$); all other models forecast each dimension independently. Chronos-2 supports past and future covariates, TabPFN-TS supports known covariates, and the remaining pretrained models ignore covariates.

We additionally evaluate statistical models (AutoETS, AutoARIMA, AutoTheta, and the SCUM ensemble) \citep{garza2022statsforecast,petropoulos2020simple}, simple baselines (Seasonal Naive, Naive, Drift) \citep{hyndman2018forecasting}, and task-specific models including CatBoost \citep{prokhorenkova2018catboost}, LightGBM \citep{ke2017lightgbm}, PatchTST \citep{nie2022time}, TFT \citep{lim2021temporal}, and DeepAR \citep{salinas2020deepar}.

\textbf{Evaluation metrics.}
We follow the evaluation protocol from Sections~\ref{sec:tasks} and \ref{sec:aggregation}. Tasks are evaluated using SQL for probabilistic forecasting and MASE for point forecasting, with results aggregated across all 100 tasks using win rates and skill scores for both marginal and pairwise comparisons. For broader context, we also report the following metrics.

\textbf{Data leakage.}
The goal of \bench is to assess the \textit{zero-shot} capabilities of pretrained forecasting models. Two forms of leakage undermine this goal: (i) training on the benchmark training split and (ii) exposure to the test split. To guard against both, contributors must report, for each model--task pair, whether \textit{any} part of the dataset at the same frequency was used during training; resampled variants at different frequencies are not considered leakage.
For tasks with reported overlap, we discard the submitted results and instead impute performance using the strongest fully zero-shot univariate model at release time, Chronos-Bolt (Base). This removes leakage for affected tasks without heavily penalizing the submitted model.
The leakage indicator is required only for pretrained models. Task-specific models may train on the task training split, in which case the zero-shot requirement does not apply. Overall, this policy provides a practical safeguard against benchmark overfitting, while recognizing that more subtle forms of leakage may remain difficult to detect.

\textbf{Runtime.}
We report the median end-to-end runtime (training and inference across all evaluation windows) normalized to 100 time series per task by scaling runtimes by $100 / (N \times W)$, where $N$ is the number of series and $W$ the number of evaluation windows. Although runtime depends on hardware and implementation details, it provides a useful measure of computational efficiency across modeling paradigms and training setups.

\textbf{Model failures.} If a model fails to produce a forecast on certain tasks (e.g., due to crashing), we replace its performance with the score of the Seasonal Naive baseline.

\subsection{Results}
Tables~\ref{tab:lb-sql-subset} \& \ref{tab:lb-mase-subset} summarize the marginal performance of selected models on \bench, showing the strongest models from each category for brevity. Full results for all 21 models are provided in the appendix, and the live leaderboard is available on Hugging Face.
Chronos-2 achieves the highest average win rate and skill score values, followed by TiRex and TimesFM-2.5. Older pretrained models such as Chronos-Bolt and Toto rank lower. All pretrained models are substantially more accurate and faster than the best statistical and task-specific models (TFT and SCUM Ensemble).

\looseness=-1
To assess whether differences among the leading models reflect genuine improvements rather than evaluation noise, we examine pairwise comparisons under the SQL metric in Figures~\ref{fig:top3-win-rate-pairwise} and \ref{fig:top3-skill-score-pairwise}. The confidence intervals indicate a statistically significant gap between Chronos-2 and the remaining models. In contrast, TiRex and TimesFM-2.5 show heavy overlap, suggesting no clear winner; under different benchmark compositions or task weightings, either could emerge as the second-best performer.

\begin{figure}[t]
\centering
\begin{minipage}{0.48\columnwidth}
    \centering
    \includegraphics[width=\linewidth]{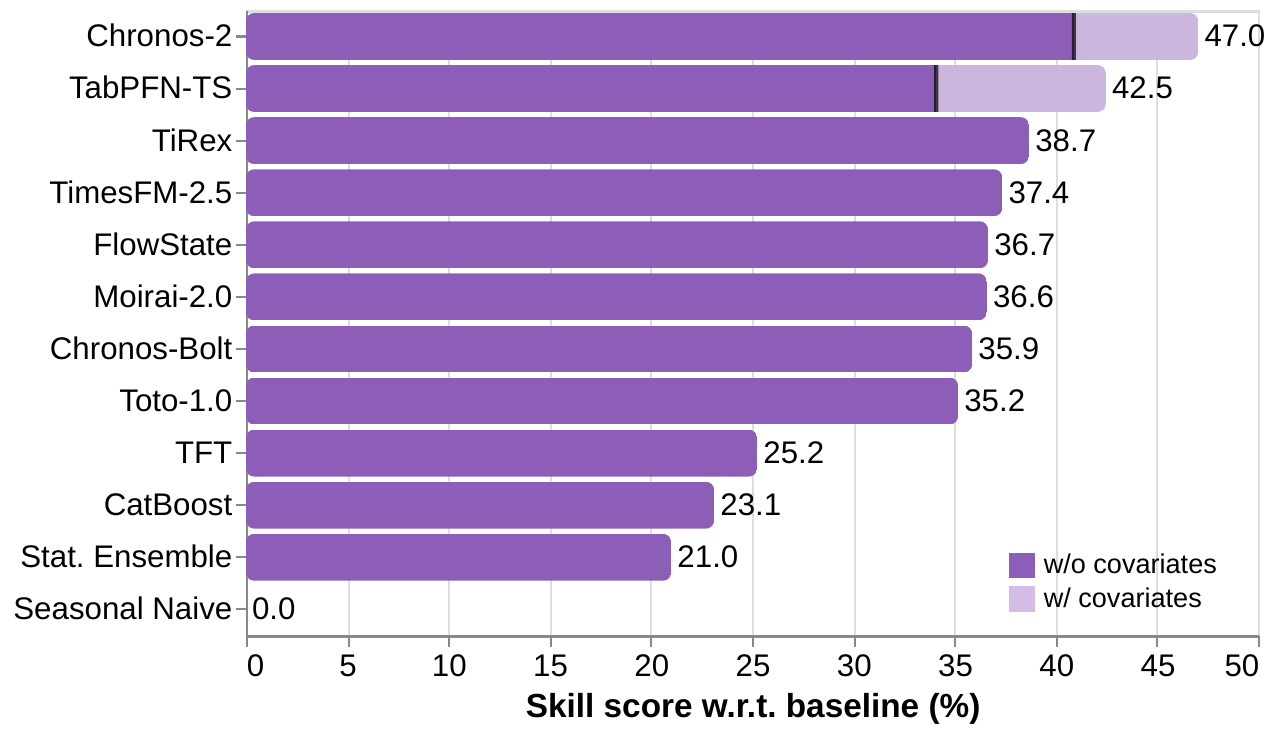}
    \caption{Average skill scores (SQL) on the 42 \bench\ tasks with dynamic covariates.
    }
    \label{fig:skill-score-covariates}
\end{minipage}
\hfill
\begin{minipage}{0.48\columnwidth}
    \centering
    \includegraphics[width=\linewidth]{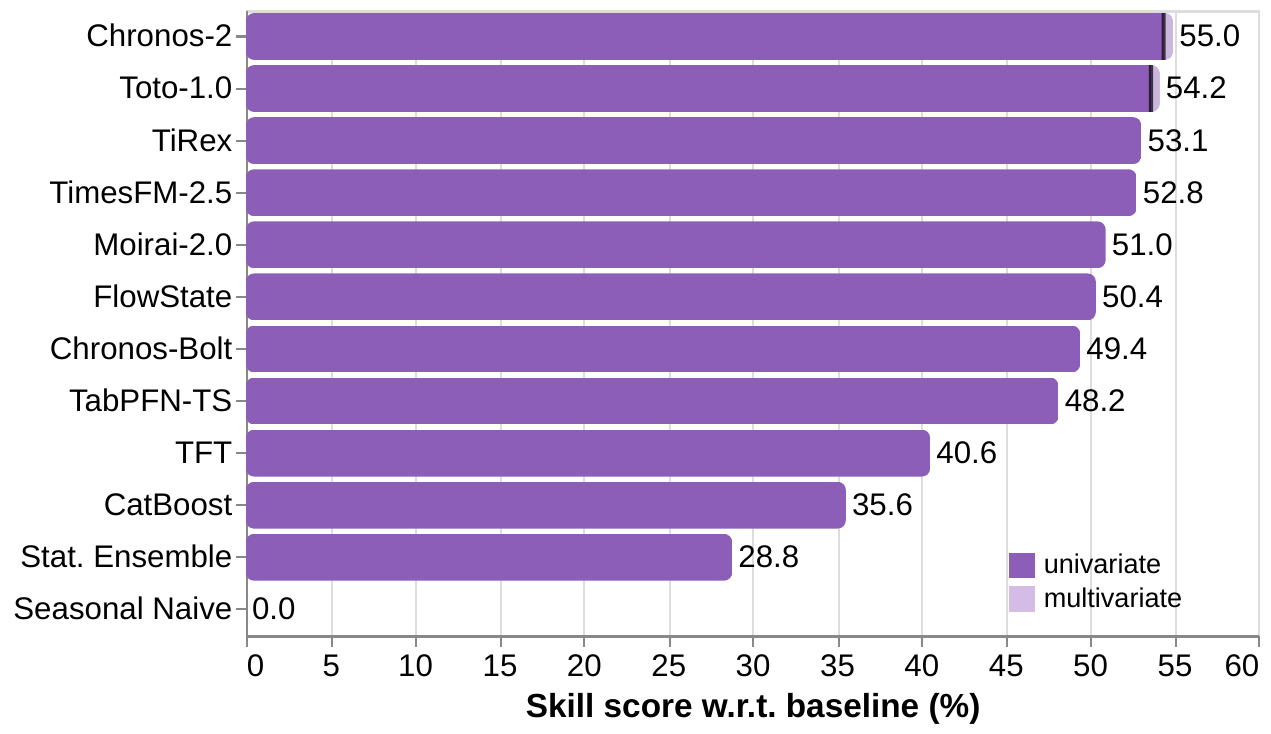}
    \caption{Average skill scores (SQL) on 35 multivariate tasks of \bench.
    }
    \label{fig:skill-score-only-multivariate}
\end{minipage}
\end{figure}

\subsection{Covariates and multivariate forecasting}

\looseness=-1
\textbf{Forecasting with covariates.}
To measure the impact of covariates, we compare models with and without covariate inputs on the 42 \bench tasks containing dynamic covariates (\cref{fig:skill-score-covariates}), reporting skill scores under the SQL metric. Covariates substantially improve performance for both Chronos-2 (47.0\% vs.\ 40.9\%) and TabPFN-TS (42.5\% vs.\ 34.1\%), showing that covariate information is a key driver of their strong results. Yet most pretrained forecasting models still ignore covariates, leaving substantial gains untapped and making covariate support an important direction for future work.

\looseness=-1
\textbf{Multivariate forecasting.}
To measure the impact of native multivariate modeling, we compare models in multivariate and univariate modes on the 35 multivariate tasks in \bench\ (\cref{fig:skill-score-only-multivariate}), again using SQL skill scores. The gains from multivariate modeling are modest for both Chronos-2 (55.0\% vs.\ 54.4\%) and Toto-1.0 (54.7\% vs.\ 54.1\%). Similar findings were reported by \citet{nie2022time}, who observed that univariate models often match multivariate ones. In contrast to the large gains from covariates, most forecasting benchmarks emphasize multivariate forecasting while largely ignoring covariates, highlighting the gap addressed by \bench.

\section{Scope and limitations}
\label{sec:limitations}
\looseness=-1
\bench is designed as a broad benchmark spanning many domains, frequencies, and forecasting settings, providing a general view of model capabilities rather than optimizing for any single application domain. As a result, some models may perform differently on specialized tasks or domain-specific benchmarks, and we hope the lightweight design of \fev will facilitate the creation of complementary benchmarks for specific use cases.
In addition, \bench prioritizes broad coverage, fast evaluation, and suitability for model development over fully leakage-free evaluation. We therefore rely on self-reported training overlap as a practical safeguard against contamination. Alternative approaches such as Impermanent~\citep{garza2026impermanent} and TS-Arena~\citep{meyer2026tsarena} provide complementary leakage-free protocols, but with narrower coverage and slower evaluation cycles.

\section{Conclusion}
We introduced \bench, a benchmark with covariates, multivariate tasks, and principled aggregation methods for statistically robust evaluation across diverse forecasting domains. Our results highlight covariate support as a promising direction for pretrained forecasting models. Complementing this, we presented \fev, a lightweight package for reproducible and extensible forecasting evaluation. Together, \bench and \fev provide a practical foundation for developing and comparing the next generation of forecasting models.

\bibliography{references}
\bibliographystyle{icml2025}

\appendix

\appendix
\onecolumn  %
\section{Tasks}
\label{app:tasks}
In total, \bench %
contains 100 time series forecasting tasks.
In this section we provide the main statistics of these tasks together with citations to the sources of the datasets.
For competition datasets we use their fixed forecast horizon $H$; for all others, $H$ is set by a frequency--horizon mapping, except for a subset of hourly datasets where we use $H=168$ to enable long-range forecasting.
The number of evaluation windows $W$ is then chosen to evenly split the series while ensuring that sufficient historical data is available for each forecast of length $H$.
Dataset frequencies are reported using \texttt{pandas} frequency aliases (minu\textbf{T}ely, \textbf{H}ourly, \textbf{D}aily, \textbf{W}eekly, \textbf{M}onthly, \textbf{Q}uarterly, \textbf{Y}early).

The precise task definitions in YAML format are available under \href{https://github.com/autogluon/fev}{\texttt{https://github.com/autogluon/fev}}.
The datasets used for evaluation under \href{https://huggingface.co/datasets/autogluon/fev_datasets}{\texttt{https://huggingface.co/datasets/autogluon/fev\_datasets}}.

\subsection{GIFT-Eval}

\begin{table}[h]
\resizebox{\textwidth}{!}{\begin{tabular}{lllrrrrrrrr}
\textbf{Task} & \textbf{Domain} & \textbf{Freq.} & $H$ & $W$ & \textbf{Median length} & \textbf{\# series} & \textbf{\# targets} & \textbf{\# past cov.} & \textbf{\# known cov.} & \textbf{\# static cov.} \\
\midrule
BizITObs - L2C & cloud & 5T & 288 & 20 & 31,968 & 1 & 7 & 0 & 0 & 0 \\
BizITObs - L2C & cloud & H & 24 & 20 & 2,664 & 1 & 7 & 0 & 0 & 0 \\
ETT & energy & 15T & 96 & 20 & 69,680 & 2 & 7 & 0 & 0 & 0 \\
ETT & energy & H & 168 & 20 & 17,420 & 2 & 7 & 0 & 0 & 0 \\
ETT & energy & D & 28 & 20 & 724 & 2 & 7 & 0 & 0 & 0 \\
ETT & energy & W & 13 & 5 & 103 & 2 & 7 & 0 & 0 & 0 \\
Hierarchical Sales & retail & D & 28 & 10 & 1,825 & 118 & 1 & 0 & 0 & 0 \\
Hierarchical Sales & retail & W & 13 & 10 & 260 & 118 & 1 & 0 & 0 & 0 \\
Hospital & healthcare & M & 12 & 4 & 84 & 767 & 1 & 0 & 0 & 0 \\
Jena Weather & nature & 10T & 144 & 20 & 52,704 & 1 & 21 & 0 & 0 & 0 \\
Jena Weather & nature & D & 28 & 11 & 366 & 1 & 21 & 0 & 0 & 0 \\
Jena Weather & nature & H & 24 & 20 & 8,784 & 1 & 21 & 0 & 0 & 0 \\
Loop Seattle & mobility & D & 28 & 10 & 365 & 323 & 1 & 0 & 0 & 0 \\
Loop Seattle & mobility & 5T & 288 & 10 & 105,120 & 323 & 1 & 0 & 0 & 0 \\
Loop Seattle & mobility & H & 168 & 10 & 8,760 & 323 & 1 & 0 & 0 & 0 \\
M-DENSE & mobility & D & 28 & 10 & 730 & 30 & 1 & 0 & 0 & 0 \\
M-DENSE & mobility & H & 168 & 10 & 17,520 & 30 & 1 & 0 & 0 & 0 \\
SZ Taxi & mobility & 15T & 96 & 10 & 2,976 & 156 & 1 & 0 & 0 & 0 \\
SZ Taxi & mobility & H & 168 & 2 & 744 & 156 & 1 & 0 & 0 & 0 \\
Solar & energy & W & 13 & 1 & 52 & 137 & 1 & 0 & 0 & 0 \\
Solar & energy & D & 28 & 10 & 365 & 137 & 1 & 0 & 0 & 0 \\
\end{tabular}
}
    \caption{Tasks based on datasets coming from the GIFT-Eval corpus \citep{aksu2024gift}.}
\end{table}

The GIFT-Eval corpus \citep{aksu2024gift} contains various univariate and multivariate datasets, none of which provide covariates.
The original datasets have been collected from sources such as \citet{godahewa2021monash,jiang2023libcity,mancuso2021machine,wu2021autoformer,palaskar2024automixer}.

\subsection{Macroeconomic datasets}

\begin{table}[h]
\resizebox{\textwidth}{!}{\begin{tabular}{lllrrrrrrrr}
\textbf{Task} & \textbf{Domain} & \textbf{Freq.} & $H$ & $W$ & \textbf{Median length} & \textbf{\# series} & \textbf{\# targets} & \textbf{\# past cov.} & \textbf{\# known cov.} & \textbf{\# static cov.} \\
\midrule
Australian Tourism & econ & Q & 8 & 2 & 36 & 89 & 1 & 0 & 0 & 0 \\
FRED-MD - CEE & econ & M & 12 & 20 & 798 & 1 & 3 & 4 & 0 & 0 \\
FRED-MD - Macro & econ & M & 12 & 20 & 798 & 1 & 51 & 0 & 0 & 0 \\
FRED-QD - CEE & econ & Q & 8 & 20 & 266 & 1 & 3 & 4 & 0 & 0 \\
FRED-QD - Macro & econ & Q & 8 & 20 & 266 & 1 & 51 & 0 & 0 & 0 \\
GVAR & econ & Q & 8 & 10 & 178 & 33 & 6 & 3 & 0 & 0 \\
US Consumption & econ & M & 12 & 10 & 792 & 31 & 1 & 0 & 0 & 0 \\
US Consumption & econ & Q & 8 & 10 & 262 & 31 & 1 & 0 & 0 & 0 \\
US Consumption & econ & Y & 5 & 10 & 64 & 31 & 1 & 0 & 0 & 0 \\
World CO2 Emissions & econ & Y & 5 & 9 & 60 & 191 & 1 & 0 & 0 & 0 \\
World Life Expectancy & econ & Y & 5 & 10 & 74 & 237 & 1 & 0 & 0 & 0 \\
World Tourism & econ & Y & 5 & 2 & 21 & 178 & 1 & 0 & 0 & 0 \\
\end{tabular}
}
    \caption{Tasks based on various macroeconomic datasets.}
\end{table}

We consider various macroeconomic datasets such as GVAR \citep{mohaddes2024gvar}, US Consumption \citep{wilms2016consumption}, Australian Tourism \citep{athanasopoulos2009hierarchical}, FRED-MD \citep{mccracken2016fred}, FRED-QD \citep{mccracken2020fred}, world CO2 emmissions \citep{Kaggle2025CO2EmissionsByCountry}, life expectancy \citep{Kaggle2025GlobalLifeExpectancy1950_2023} and global tourism \citep{Kaggle2025TourismEconomicImpact}.

For each of FRED-MD and FRED-QD, we create two forecasting tasks.
The first follows the CEE model \citep{christiano1999monetary} and focuses on forecasting employment, inflation, and federal funds rate indicators.
The second task involves jointly forecasting 51 core macroeconomic indicators.
Note that we use the snapshot of FRED-MD corresponding to August 2025, which is different from FRED-MD snapshot used in \citet{godahewa2021monash}.

\subsection{Energy datasets}

\begin{table}[h]
\resizebox{\textwidth}{!}{\begin{tabular}{lllrrrrrrrr}
\textbf{Task} & \textbf{Domain} & \textbf{Freq.} & $H$ & $W$ & \textbf{Median length} & \textbf{\# series} & \textbf{\# targets} & \textbf{\# past cov.} & \textbf{\# known cov.} & \textbf{\# static cov.} \\
\midrule
ENTSO-e Load & energy & 15T & 96 & 20 & 175,292 & 6 & 1 & 0 & 3 & 0 \\
ENTSO-e Load & energy & 30T & 96 & 20 & 87,645 & 6 & 1 & 0 & 3 & 0 \\
ENTSO-e Load & energy & H & 168 & 20 & 43,822 & 6 & 1 & 0 & 3 & 0 \\
EPF-BE & energy & H & 24 & 20 & 52,416 & 1 & 1 & 0 & 2 & 0 \\
EPF-DE & energy & H & 24 & 20 & 52,416 & 1 & 1 & 0 & 2 & 0 \\
EPF-FR & energy & H & 24 & 20 & 52,416 & 1 & 1 & 0 & 2 & 0 \\
EPF-NP & energy & H & 24 & 20 & 52,416 & 1 & 1 & 0 & 2 & 0 \\
EPF-PJM & energy & H & 24 & 20 & 52,416 & 1 & 1 & 0 & 2 & 0 \\
ERCOT & energy & D & 28 & 20 & 6,452 & 8 & 1 & 0 & 0 & 0 \\
ERCOT & energy & H & 168 & 20 & 154,872 & 8 & 1 & 0 & 0 & 0 \\
ERCOT & energy & M & 12 & 15 & 211 & 8 & 1 & 0 & 0 & 0 \\
ERCOT & energy & W & 13 & 20 & 921 & 8 & 1 & 0 & 0 & 0 \\
GFC12 & energy & H & 168 & 10 & 39,414 & 11 & 1 & 0 & 1 & 0 \\
GFC14 & energy & H & 168 & 20 & 17,520 & 1 & 1 & 0 & 1 & 0 \\
GFC17 & energy & H & 168 & 20 & 17,544 & 8 & 1 & 0 & 1 & 0 \\
Solar with Weather & energy & 15T & 96 & 20 & 198,600 & 1 & 1 & 2 & 7 & 0 \\
Solar with Weather & energy & H & 24 & 20 & 49,648 & 1 & 1 & 2 & 7 & 0 \\
\end{tabular}
}
    \caption{Tasks based on datasets related to energy generation and consumption.}
\end{table}

These datasets include electricity price forecasting (EPF) benchmark \citep{lago2021forecasting}, ERCOT generation data \citep{ansari2024chronos}, ENTSO-e load data \citep{OPSD2020} with weather originating from \texttt{Renewables.ninja} \citep{staffell2023global}, and solar generation with weather \citep{Kaggle2025RenewableEnergyWeather}.

\subsection{BOOMLET}

\begin{table}[h]
\resizebox{\textwidth}{!}{\begin{tabular}{lllrrrrrrrr}
\textbf{Task} & \textbf{Domain} & \textbf{Freq.} & $H$ & $W$ & \textbf{Median length} & \textbf{\# series} & \textbf{\# targets} & \textbf{\# past cov.} & \textbf{\# known cov.} & \textbf{\# static cov.} \\
\midrule
BOOMLET - 1062 & cloud & 5T & 288 & 20 & 16,384 & 1 & 21 & 0 & 0 & 0 \\
BOOMLET - 1209 & cloud & 5T & 288 & 20 & 16,384 & 1 & 53 & 0 & 0 & 0 \\
BOOMLET - 1225 & cloud & T & 60 & 20 & 16,384 & 1 & 49 & 0 & 0 & 0 \\
BOOMLET - 1230 & cloud & 5T & 288 & 20 & 16,384 & 1 & 23 & 0 & 0 & 0 \\
BOOMLET - 1282 & cloud & T & 60 & 20 & 16,384 & 1 & 35 & 0 & 0 & 0 \\
BOOMLET - 1487 & cloud & 5T & 288 & 20 & 16,384 & 1 & 54 & 0 & 0 & 0 \\
BOOMLET - 1631 & cloud & 30T & 96 & 20 & 10,463 & 1 & 40 & 0 & 0 & 0 \\
BOOMLET - 1676 & cloud & 30T & 96 & 20 & 10,463 & 1 & 100 & 0 & 0 & 0 \\
BOOMLET - 1855 & cloud & H & 24 & 20 & 5,231 & 1 & 52 & 0 & 0 & 0 \\
BOOMLET - 1975 & cloud & H & 24 & 20 & 5,231 & 1 & 75 & 0 & 0 & 0 \\
BOOMLET - 2187 & cloud & H & 24 & 20 & 5,231 & 1 & 100 & 0 & 0 & 0 \\
BOOMLET - 285 & cloud & T & 60 & 20 & 16,384 & 1 & 75 & 0 & 0 & 0 \\
BOOMLET - 619 & cloud & T & 60 & 20 & 16,384 & 1 & 52 & 0 & 0 & 0 \\
BOOMLET - 772 & cloud & T & 60 & 20 & 16,384 & 1 & 67 & 0 & 0 & 0 \\
BOOMLET - 963 & cloud & T & 60 & 20 & 16,384 & 1 & 28 & 0 & 0 & 0 \\
\end{tabular}
}
    \caption{Tasks based on datasets from BOOMLET \citep{cohen2025toto}.}
\end{table}

We include the multivariate observability datasets from the BOOMLET benchmark \citep{cohen2025toto}. BOOMLET is a subset of the larger BOOM benchmark curated by the original authors. We additionally limit our attention to datasets with frequency of at least 1 minute to avoid including too many datasets from a single source to \bench.

\subsection{Forecasting competitions}

\begin{table}[h]
\resizebox{\textwidth}{!}{\begin{tabular}{lllrrrrrrrr}
\textbf{Task} & \textbf{Domain} & \textbf{Freq.} & $H$ & $W$ & \textbf{Median length} & \textbf{\# series} & \textbf{\# targets} & \textbf{\# past cov.} & \textbf{\# known cov.} & \textbf{\# static cov.} \\
\midrule
Favorita Store Sales & retail & M & 12 & 2 & 54 & 1,579 & 1 & 1 & 1 & 6 \\
Favorita Store Sales & retail & W & 13 & 10 & 240 & 1,579 & 1 & 1 & 1 & 6 \\
Favorita Store Sales & retail & D & 28 & 10 & 1,688 & 1,579 & 1 & 1 & 2 & 6 \\
Favorita Transactions & retail & M & 12 & 2 & 54 & 51 & 1 & 1 & 0 & 5 \\
Favorita Transactions & retail & W & 13 & 10 & 240 & 51 & 1 & 1 & 0 & 5 \\
Favorita Transactions & retail & D & 28 & 10 & 1,688 & 51 & 1 & 1 & 1 & 5 \\
KDD Cup 2022 & energy & D & 14 & 10 & 243 & 134 & 1 & 9 & 0 & 0 \\
KDD Cup 2022 & energy & 10T & 288 & 10 & 35,279 & 134 & 1 & 9 & 0 & 0 \\
KDD Cup 2022 & energy & 30T & 96 & 10 & 11,758 & 134 & 1 & 9 & 0 & 0 \\
M5 & retail & M & 12 & 1 & 58 & 30,490 & 1 & 0 & 8 & 5 \\
M5 & retail & W & 13 & 1 & 257 & 30,490 & 1 & 0 & 8 & 5 \\
M5 & retail & D & 28 & 1 & 1,810 & 30,490 & 1 & 0 & 8 & 5 \\
Restaurant & retail & D & 28 & 8 & 296 & 817 & 1 & 0 & 0 & 4 \\
Rohlik Orders & retail & W & 8 & 5 & 170 & 7 & 1 & 9 & 4 & 0 \\
Rohlik Orders & retail & D & 61 & 5 & 1,197 & 7 & 1 & 9 & 4 & 0 \\
Rohlik Sales & retail & W & 8 & 1 & 150 & 5,243 & 1 & 1 & 13 & 7 \\
Rohlik Sales & retail & D & 14 & 1 & 1,046 & 5,390 & 1 & 1 & 13 & 7 \\
Rossmann & retail & W & 13 & 8 & 133 & 1,115 & 1 & 1 & 4 & 10 \\
Rossmann & retail & D & 48 & 10 & 942 & 1,115 & 1 & 1 & 5 & 10 \\
Walmart & retail & W & 39 & 1 & 143 & 2,936 & 1 & 0 & 10 & 4 \\
\end{tabular}
}
    \caption{Tasks based on datasets coming from various forecasting competitions}
\end{table}

We use datasets from forecasting competitions held on \texttt{kaggle.com} \citep{bojer2021kaggle}.
These include Favorita store sales \& transactions \citep{Kaggle2020StoreSales}, the M5 competition \citep{makridakis2022m5}, restaurant visitor \& reservation \citep{Kaggle2017RecruitRestaurant}, Rossmann \citep{Kaggle2015Rossmann}, Walmart \citep{Kaggle2014Walmart}, and Rohlik \citep{RohlikSalesForecasting2025} store sales forecasting competitions.
We also consider the KDD Cup 2022 dataset where the goal is to predict wind power generation \citep{zhou2022sdwpf}, and the Global Energy Forecasting Competitions held in 2012, 2014 and 2017 \citep{hong2014global}.

\subsection{Other sources}

\begin{table}[h]
\resizebox{\textwidth}{!}{\begin{tabular}{lllrrrrrrrr}
\textbf{Task} & \textbf{Domain} & \textbf{Freq.} & $H$ & $W$ & \textbf{Median length} & \textbf{\# series} & \textbf{\# targets} & \textbf{\# past cov.} & \textbf{\# known cov.} & \textbf{\# static cov.} \\
\midrule
ECDC ILI & healthcare & W & 13 & 10 & 201 & 25 & 1 & 0 & 0 & 0 \\
Hermes & retail & W & 52 & 1 & 261 & 10,000 & 1 & 0 & 1 & 2 \\
Hospital Admissions & healthcare & D & 28 & 20 & 1,731 & 8 & 1 & 0 & 0 & 0 \\
Hospital Admissions & healthcare & W & 13 & 16 & 246 & 8 & 1 & 0 & 0 & 0 \\
Redset & cloud & 5T & 288 & 10 & 25,920 & 118 & 1 & 0 & 0 & 1 \\
Redset & cloud & 15T & 96 & 10 & 8,640 & 126 & 1 & 0 & 0 & 1 \\
Redset & cloud & H & 24 & 10 & 2,160 & 138 & 1 & 0 & 0 & 1 \\
UCI Air Quality & nature & H & 168 & 20 & 9,357 & 1 & 4 & 0 & 3 & 0 \\
UCI Air Quality & nature & D & 28 & 11 & 389 & 1 & 4 & 0 & 3 & 0 \\
UK COVID - Nation - Cumulative & healthcare & D & 28 & 20 & 729 & 4 & 3 & 5 & 0 & 0 \\
UK COVID - Nation - Cumulative & healthcare & W & 8 & 4 & 105 & 4 & 3 & 5 & 0 & 0 \\
UK COVID - Nation - New & healthcare & D & 28 & 20 & 729 & 4 & 3 & 5 & 0 & 0 \\
UK COVID - Nation - New & healthcare & W & 8 & 4 & 105 & 4 & 3 & 5 & 0 & 0 \\
UK COVID - UTLA - Cumulative & healthcare & W & 13 & 5 & 104 & 214 & 1 & 0 & 0 & 0 \\
UK COVID - UTLA - New & healthcare & D & 28 & 10 & 721 & 214 & 1 & 0 & 0 & 0 \\
\end{tabular}
}
    \caption{Tasks based on datasets collected from other sources.}
\end{table}

We also include datasets from the following miscellaneous sources.
\begin{itemize}
\item Influenza-like-illness cases collected by the European Centre for Disease Prevention and Control \citep{ECDC2025RespiratoryViruses}.

\item Fashion trend data from Hermes \citep{david2022hermes}.

\item Hospital admissions data from Riyadh \citep{Kaggle2025RiyadhHospitalAdmissions}.

\item Query counts for Amazon Redshift database servers \citep{renen2024redset}.

\item Solar energy generation with corresponding weather covariates \citep{Kaggle2025RenewableEnergyWeather}.

\item Air quality measurements in an Italian city with accompanying weather data \citep{DeVito2008ElectronicNoseCalibration}.

\item COVID-19 cases, hospital admissions, and deaths in the United Kingdom at different administrative levels \citep{Kaggle2025UKCovidDashboard}.
\end{itemize}

\section{Models}
\label{app:models}

We evaluate seven pretrained models on \bench, whose key properties are summarized in \cref{tab:pretrained-models-summary}. Most are decoder-only transformers, except TiRex (decoder-only xLSTM), Chronos-Bolt Base (encoder-decoder transformer), Chronos-2 (encoder-only transformer), and FlowState (SSM encoder with functional basis decoder).
All models except TabPFN-TS and FlowState process non-overlapping patches of time series rather than individual observations.
Toto, TabPFN-TS, and Sundial produce sample forecasts, while the others produce direct multi-step-ahead quantile forecasts.
Toto and Chronos-2 are the only models that natively support multivariate targets.
We provided only known covariates to TabPFN-TS, excluding past covariates as recommended by the authors. Chronos-2 supports both past and known covariates.

For all pretrained models, we keep hyperparameters at their default values unless specified in the table. For Toto, we reduced the samples per batch and batch size to avoid out-of-memory errors on large multivariate datasets. We run all pretrained models on a g5.2xlarge AWS instance with a single A10G GPU (24GB GPU RAM, 32GB RAM), using PyTorch 2.6 with CUDA 12.6 via AWS Deep Learning Containers. The version in the table below refers to either the PyPI package version, or the date on which the official repository was cloned if no PyPI package is provided by the authors.

\begin{table}[h]
    \centering
    \resizebox{\textwidth}{!}{%
    \begin{tabular}{llrrrlr}
\toprule
Model Name & Hugging Face ID & Batch size & Version & Max Context & Hyperparameters \\
\midrule
Chronos-2 & \href{https://huggingface.co/amazon/chronos-2}{\texttt{amazon/chronos-2}} & 100 & 2.2.2 & 8192 & \texttt{\{cross\_learning: True\}} \\
TiRex & \href{https://huggingface.co/NX-AI/TiRex}{\texttt{NX-AI/TiRex}} & 512 & 2025-09-01 & 2048 & - \\
Toto & \href{https://huggingface.co/Datadog/Toto-Open-Base-1.0}{\texttt{Datadog/Toto-Open-Base-1.0}} & 24 & 2025-08-01 & 4096 & \texttt{\{samples\_per\_batch: 8\}} \\
Moirai 2.0 & \href{https://huggingface.co/Salesforce/moirai-2.0-R-small}{\texttt{Salesforce/moirai-2.0-R-small}} & 128 & 2025-08-10 & 4000 & - \\
Chronos-Bolt & \href{https://huggingface.co/amazon/chronos-bolt-base}{\texttt{amazon/chronos-bolt-base}} & 256 & 1.5.3 & 2048 & - \\
TimesFM 2.5 & \href{https://huggingface.co/google/timesfm-2.5-200m-pytorch}{\texttt{google/timesfm-2.5-200m-pytorch}} & 256 & 2025-09-28 & 16000 & - \\
TabPFN-TS & \href{https://huggingface.co/Prior-Labs/TabPFN-v2-reg}{\texttt{Prior-Labs/TabPFN-v2-reg}} & - & 1.0.10 & 4096 & \texttt{\{checkpoint: '2noar4o2'\}} \\
FlowState & \href{https://huggingface.co/ibm-granite/granite-timeseries-flowstate-r1}{\texttt{ibm-granite/granite-timeseries-flowstate-r1}} & 16 & 2026-05-06 & auto & - \\
Sundial & \href{https://huggingface.co/thuml/sundial-base-128m}{\texttt{thuml/sundial-base-128m}} & 512 & 2025-09-01 & 2880 & - \\
\bottomrule
\end{tabular}

    }
    \caption{Properties of different pretrained time series forecasting models.}
    \label{tab:pretrained-models-summary}
\end{table}

We include statistical baselines from the StatsForecast \texttt{v2.0.1} \citep{garza2022statsforecast}, such as AutoETS, AutoARIMA, AutoTheta, as well as the SCUM
ensemble~\citep{petropoulos2020simple}. To avoid long runtimes, we truncate the context length of statistical models to 1000 steps and set the maximum season length to 200 for AutoETS,
AutoTheta, AutoARIMA and SCUM Ensemble.

We train LightGBM~\citep{ke2017lightgbm} and CatBoost~\citep{prokhorenkova2018catboost} in recursive mode using MLForecast \texttt{v0.14.0} \citep{morales2021mlforecast}, with
preprocessing hyperparameters (target transforms, lag transforms) selected via grid search. We also train DeepAR \citep{salinas2020deepar}, PatchTST \citep{nie2022time}, and TFT \citep{lim2021temporal} using AutoGluon-TimeSeries \texttt{v1.5.0} \citep{shchur2023autogluon}, refitting on each evaluation window. We use seasonal naive as a fallback when there is insufficient training data for deep learning models in some windows.

Statistical and global models were run on \texttt{m6i.4xlarge} AWS instances with 16 vCPU cores and 64GB RAM. All model wrappers are available at \href{https://github.com/autogluon/fev/tree/main/models}{\texttt{github.com/autogluon/fev/models}}.

\section{Extended discussion of aggregation methods}
\label{app:agg-methods}
This appendix clarifies how average win rates $W_j$ (\cref{eq:avg-winrate}) relate to other aggregation methods
commonly used in benchmarking, such as average ranks \citep{aksu2024gift,ansari2024chronos} and Bradley--Terry (``Elo'') scores \citep{erickson2025tabarena}.
We show that all three induce the same ordering of models.

\subsection{Average win rate and average rank are equivalent}
\label{app:avg-rank-avg-winrate}
For a given task $r$ and model $j$, let
\[
M_{\text{lower}}=\sum_{\substack{k=1 \\ k \neq j}}^{M}\mathds{1}(E_{rk}<E_{rj}),
\qquad
M_{\text{tied}}=\sum_{\substack{k=1 \\ k \neq j}}^{M}\mathds{1}(E_{rk}=E_{rj}).
\]
The midrank of $j$ on task $r$ is
\[
\operatorname{rank}_{rj} \;=\; 1 + M_{\text{lower}} + \tfrac{1}{2}M_{\text{tied}}.
\]

Its contribution to $W_j$ equals
\[
\frac{1}{M-1}\sum_{k \ne j} \Big(\mathds{1}(E_{rj}<E_{rk})+\tfrac{1}{2}\,\mathds{1}(E_{rj}=E_{rk})\Big)
=1-\frac{\operatorname{rank}_{rj}-1}{M-1}.
\]
Averaging over tasks gives
\[
W_j = 1-\frac{\overline{\operatorname{rank}}_j-1}{M-1},
\qquad
\overline{\operatorname{rank}}_j=\tfrac{1}{R}\sum_{r=1}^R \operatorname{rank}_{rj}.
\]

Thus, $W_j$ and $\overline{\operatorname{rank}}_j$ are affinely equivalent and induce the same ordering of models (with lower rank $\leftrightarrow$ higher win rate).

\subsection{Average win rate and Bradley--Terry (Elo) scores result in the same ranking}
\label{app:avg-winrate-elo}
The Bradley--Terry (BT) model, also known as Elo rating \citep{chiang2024chatbot}, provides
a parametric way to convert pairwise win rates into latent skill scores. In contrast to the
nonparametric average win rate $W_j$, the BT model assumes that each model $j$ has an
underlying skill parameter $\theta_j \in \R$, and that the probability of $j$ outperforming $k$
follows a logistic link:
\[
\Pr(E_{rj} < E_{rk}) \;=\; \sigma\!\big(\lambda(\theta_j-\theta_k)\big),
\qquad
\sigma(x)=\tfrac{1}{1+e^{-x}}, \;\; \lambda>0.
\]
Here $\lambda$ is a scaling constant (in Elo, $\lambda=\ln 10/400$). The parameters
$\theta=(\theta_1,\dots,\theta_M)$ are estimated by maximum likelihood:
\[
\hat{\theta} \;\in\; \arg\max_{\theta \in \R^M}
\sum_{j<m}\Big[
W_{jm}\log\sigma(\lambda(\theta_j-\theta_m))
+(1-W_{jm})\log\sigma(\lambda(\theta_m-\theta_j))
\Big].
\]
Typically some identifiability constraint is added, such as fixing $\theta_\beta = 1000$ for a chosen baseline $\beta$.

\begin{proposition}
Suppose all $M$ models are compared on the same $R$ tasks, with pairwise win rates $W_{jk}$
(\cref{eq:pairwise-winrate}) and average win rates
$W_j=\tfrac{1}{M-1}\sum_{k\neq j} W_{jk}$ (\cref{eq:avg-winrate}).
At the BT MLE with scale $\lambda>0$,
\[
\theta_j>\theta_k \;\;\Longleftrightarrow\;\; W_j>W_k,
\qquad
\theta_j=\theta_k \;\;\Longleftrightarrow\;\; W_j=W_k.
\]
\end{proposition}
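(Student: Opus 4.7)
The plan is to reduce the proposition to the first-order optimality conditions of the Bradley--Terry log-likelihood and then apply monotonicity of the logistic function. The key identity I will invoke throughout is $W_{jm}+W_{mj}=1$, which follows directly from \cref{eq:pairwise-winrate} because wins, losses, and half-weighted ties partition the $R$ tasks.

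First I would compute $\partial \ell/\partial\theta_a$ for each index $a$. Each pair $\{j,m\}$ contributes two log-sigmoid terms with coefficients $W_{jm}$ and $W_{mj}$; applying $(\log\sigma)'(x)=1-\sigma(x)$ and the complementarity $1-\sigma(x)=\sigma(-x)$, and using $W_{am}+W_{ma}=1$ to collapse the cross-terms, the stationarity condition reduces to the clean fixed-point identity
\begin{equation*}
\sum_{m\neq a}\sigma\!\big(\lambda(\hat\theta_a-\hat\theta_m)\big)\;=\;\sum_{m\neq a}W_{am}\;=\;(M-1)\,W_a
\end{equation*}
valid for every $a$. In other words, at the MLE the average win rate $W_a$ equals the average BT-predicted win probability of $a$ against all opponents.

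From here the equivalence follows by a term-by-term monotonicity comparison. Fix indices $j$ and $k$, and split each of the above sums into an ``off-diagonal'' part over $m\notin\{j,k\}$ plus a single ``diagonal'' term ($m=k$ on the $j$-side, $m=j$ on the $k$-side). If $\hat\theta_j>\hat\theta_k$, strict monotonicity of $\sigma$ gives $\sigma(\lambda(\hat\theta_j-\hat\theta_m))>\sigma(\lambda(\hat\theta_k-\hat\theta_m))$ for every off-diagonal $m$, while the complementarity $\sigma(x)+\sigma(-x)=1$ gives $\sigma(\lambda(\hat\theta_j-\hat\theta_k))>\tfrac12>\sigma(\lambda(\hat\theta_k-\hat\theta_j))$ for the diagonal terms. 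Summing yields $(M-1)W_j>(M-1)W_k$ and hence $W_j>W_k$. The same splitting with strict inequalities replaced by equalities proves $\hat\theta_j=\hat\theta_k\Rightarrow W_j=W_k$. The reverse implications follow by contraposition together with trichotomy, since the three cases $>,=,<$ on the $\hat\theta$-side map to the corresponding three cases on the $W$-side.

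The main subtlety will be the bookkeeping for the diagonal terms: they do not simply cancel but instead are related by the complementarity $\sigma(x)+\sigma(-x)=1$, so the comparison must explicitly use that $\sigma(\lambda\Delta)>1/2$ whenever $\lambda\Delta>0$. Beyond this, the argument is essentially an algebraic manipulation of the stationarity conditions, and no additional assumptions are needed beyond those already in the proposition statement; note in particular that even if the MLE is not unique as a point in $\mathbb{R}^M$ (before imposing the identifiability constraint), the fixed-point identity forces the ordering of $\hat\theta_j$'s to agree with that of $W_j$'s for every stationary point.
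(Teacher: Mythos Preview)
Your proposal is correct and follows essentially the same approach as the paper: derive the first-order stationarity conditions $\sum_{m\neq a}\sigma(\lambda(\hat\theta_a-\hat\theta_m))=(M-1)W_a$, then compare the equations for $j$ and $k$ term by term using monotonicity of $\sigma$. The paper phrases the comparison as ``subtract the two score equations and observe that the right-hand side is a strictly increasing function of $\Delta=\theta_j-\theta_k$'', while you phrase it as ``assume $\hat\theta_j>\hat\theta_k$ and compare term by term, handling the diagonal pair via $\sigma(\lambda\Delta)>\tfrac12>\sigma(-\lambda\Delta)$, then recover the converse by trichotomy''; these are the same argument in slightly different packaging.
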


\begin{proof}
Differentiating the log-likelihood gives the score equations
\[
\frac{\partial \ell}{\partial \theta_j}
=\lambda\sum_{m\neq j}\Big(W_{jm}-\sigma(\lambda(\theta_j-\theta_m))\Big)=0.
\]
Subtracting the equations for $j$ and $k$ yields
\[
(M-1)(W_j-W_k)=
\sum_{m\neq j,k}\big[\sigma(\lambda(\theta_j-\theta_m))-\sigma(\lambda(\theta_k-\theta_m))\big]
+\big[\sigma(\lambda\Delta)-\sigma(-\lambda\Delta)\big],
\]
where $\Delta=\theta_j-\theta_k$. Each term on the right is strictly increasing in $\Delta$, so
the whole expression has the same sign as $\Delta$. Thus
$\operatorname{sign}(W_j-W_k)=\operatorname{sign}(\theta_j-\theta_k)$, with equality iff
$\Delta=0$. Strict concavity of the BT log-likelihood ensures uniqueness of the solution up to translation.
\end{proof}

\textbf{Conclusion.} Average win rate $W_j$ and BT/Elo scores $\theta_j$ induce the same ordering of models,
with higher win rate corresponding to higher Elo score.

\newpage

\section{Extended results}
\label{app:extra-results}
Tabular results for each dataset-task combination, updated marginal and pairwise results in an interactive format are available at \href{https://huggingface.co/spaces/autogluon/fev-bench}{\texttt{huggingface.co/spaces/autogluon/fev-bench}}.
The raw results in CSV format are available at \href{https://github.com/autogluon/fev/tree/main/benchmarks/fev_bench/results}{\texttt{github.com/autogluon/fev}}.

The failures for Stat.\ Ensemble, AutoARIMA, and AutoETS models in the tables below correspond to the models exceeding the 6 hour time limit for a single task.
TabPFN-TS failed on 1 tasks due to out of memory errors, and DeepAR failed due to encountering NaNs during training.

\subsection{Marginal performance}

\begin{table}[h]
    \centering
    \resizebox{\textwidth}{!}{\begin{tabular}{lrrrrr}
\toprule
\textbf{Model} & \textbf{Avg. win rate (\%)} & \textbf{Skill score (\%)} & \textbf{Median runtime / 100 series (s)} & \textbf{Leakage (\%)} & \textbf{\# failures} \\
\midrule
Chronos-2 & 91.4 & 47.3 & 0.8 & 0 & 0 \\
TiRex & 83.3 & 42.6 & 0.2 & 1 & 0 \\
TimesFM-2.5 & 79.6 & 42.2 & 1.9 & 10 & 0 \\
FlowState & 73.3 & 39.9 & 2.3 & 8 & 0 \\
Toto-1.0 & 72.4 & 40.8 & 22.3 & 8 & 0 \\
TabPFN-TS & 69.7 & 41.5 & 109.4 & 0 & 1 \\
Moirai-2.0 & 67.4 & 39.3 & 0.4 & 28 & 0 \\
Chronos-Bolt & 67.0 & 38.9 & 0.3 & 0 & 0 \\
TFT & 51.4 & 32.2 & 828.8 & 0 & 0 \\
Sundial-Base & 48.0 & 33.4 & 8.0 & 1 & 0 \\
PatchTST & 45.9 & 30.1 & 663.6 & 0 & 0 \\
Stat. Ensemble & 45.5 & 20.7 & 146.9 & 0 & 4 \\
DeepAR & 44.2 & 29.1 & 1113.5 & 0 & 3 \\
AutoARIMA & 41.4 & 21.3 & 20.1 & 0 & 4 \\
CatBoost & 34.4 & 23.0 & 29.5 & 0 & 0 \\
AutoETS & 33.0 & -26.8 & 3.5 & 0 & 3 \\
LightGBM & 31.8 & 21.0 & 3.0 & 0 & 0 \\
AutoTheta & 27.9 & 5.5 & 3.3 & 0 & 0 \\
Seasonal Naive & 19.6 & 0.0 & 0.5 & 0 & 0 \\
Naive & 12.7 & -45.4 & 0.5 & 0 & 0 \\
Drift & 10.1 & -45.8 & 0.5 & 0 & 0 \\
\bottomrule
\end{tabular}
}
    \caption{Marginal probabilistic forecasting performance of all models (according to the SQL metric) on the full \bench benchmark. The reported metrics are defined in Sections~\ref{sec:aggregation-marginal} and \ref{sec:results}.}
    \label{tab:lb-sql}
\end{table}

\begin{table}[h]
    \centering
    \resizebox{\textwidth}{!}{\begin{tabular}{lrrrrr}
\toprule
\textbf{Model} & \textbf{Avg. win rate (\%)} & \textbf{Skill score (\%)} & \textbf{Median runtime / 100 series (s)} & \textbf{Leakage (\%)} & \textbf{\# failures} \\
\midrule
Chronos-2 & 87.2 & 35.5 & 0.8 & 0 & 0 \\
TiRex & 77.0 & 30.0 & 0.2 & 1 & 0 \\
TimesFM-2.5 & 76.1 & 30.2 & 1.9 & 10 & 0 \\
FlowState & 70.5 & 27.8 & 2.3 & 8 & 0 \\
Toto-1.0 & 66.6 & 28.2 & 22.3 & 8 & 0 \\
TabPFN-TS & 63.9 & 29.8 & 109.4 & 0 & 1 \\
Moirai-2.0 & 62.4 & 27.2 & 0.4 & 28 & 0 \\
Chronos-Bolt & 61.3 & 26.5 & 0.3 & 0 & 0 \\
Sundial-Base & 53.6 & 24.7 & 8.0 & 1 & 0 \\
CatBoost & 52.4 & 23.7 & 29.5 & 0 & 0 \\
LightGBM & 49.4 & 21.7 & 3.0 & 0 & 0 \\
Stat. Ensemble & 47.7 & 16.4 & 146.9 & 0 & 4 \\
TFT & 45.7 & 20.5 & 828.8 & 0 & 0 \\
PatchTST & 41.3 & 18.4 & 663.6 & 0 & 0 \\
DeepAR & 38.6 & 17.5 & 1113.5 & 0 & 3 \\
AutoARIMA & 36.2 & 11.6 & 20.1 & 0 & 4 \\
AutoTheta & 33.3 & 11.0 & 3.3 & 0 & 0 \\
AutoETS & 33.1 & 2.3 & 3.5 & 0 & 3 \\
Seasonal Naive & 20.0 & 0.0 & 0.5 & 0 & 0 \\
Naive & 18.2 & -16.7 & 0.5 & 0 & 0 \\
Drift & 15.4 & -18.1 & 0.5 & 0 & 0 \\
\bottomrule
\end{tabular}
}
    \caption{Marginal point forecasting performance of all models (according to the MASE metric) on the full \bench benchmark. The reported metrics are defined in Sections~\ref{sec:aggregation-marginal} and \ref{sec:results}.}
    \label{tab:lb-mase}
\end{table}

\newpage
\subsection{Pairwise comparison}

\begin{figure}[h]
\centering
\includegraphics[width=\textwidth]{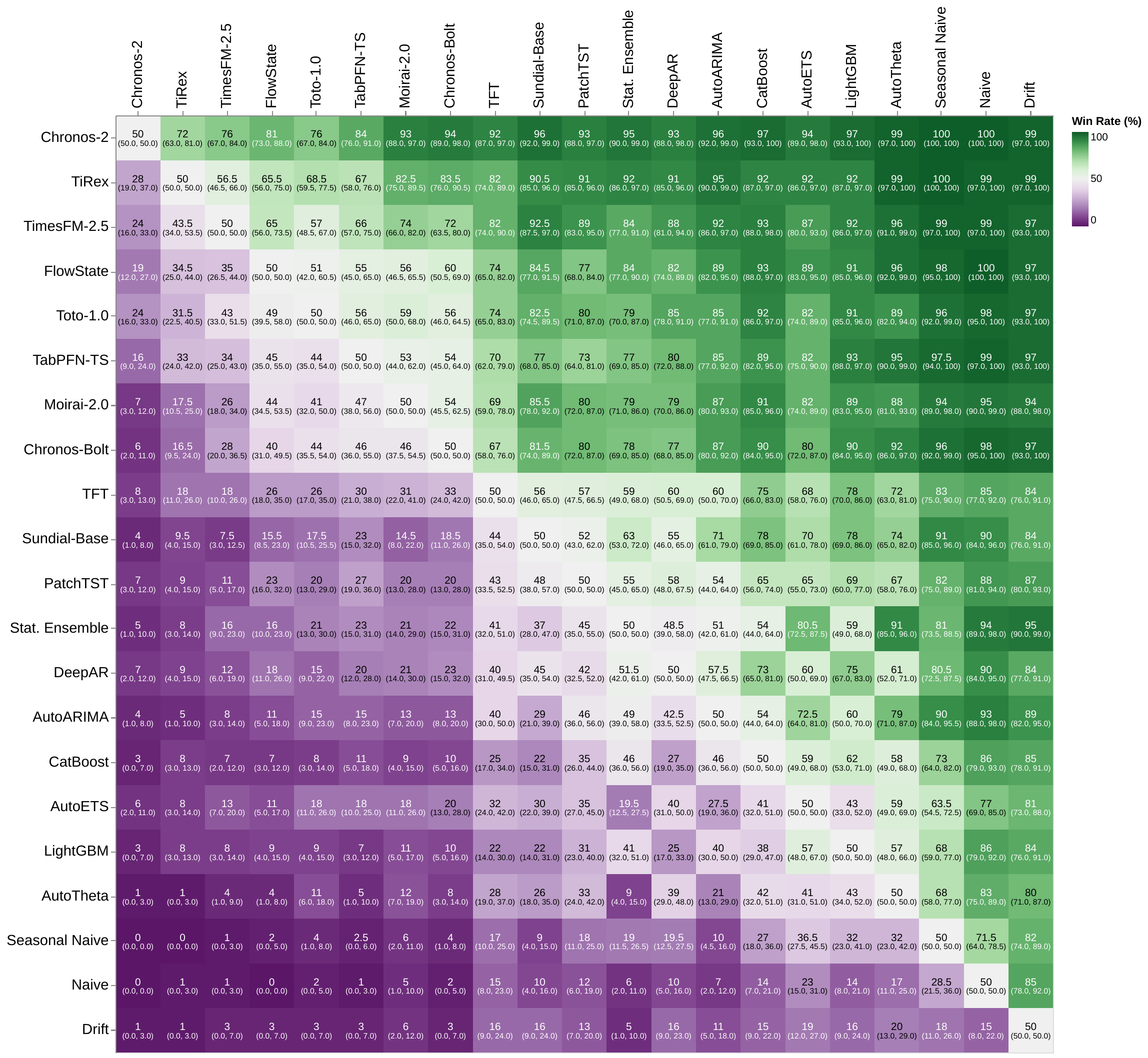}

\caption{Pairwise win rates $W_{jk}$ (\cref{eq:pairwise-winrate}) of all models against each other under the scaled quantile loss (SQL) metric on \bench, with 95\% confidence intervals obtained via bootstrapping. Higher values are better. Best viewed on screen.}
\end{figure}

\newpage

\begin{figure}
\centering
\includegraphics[width=\textwidth]{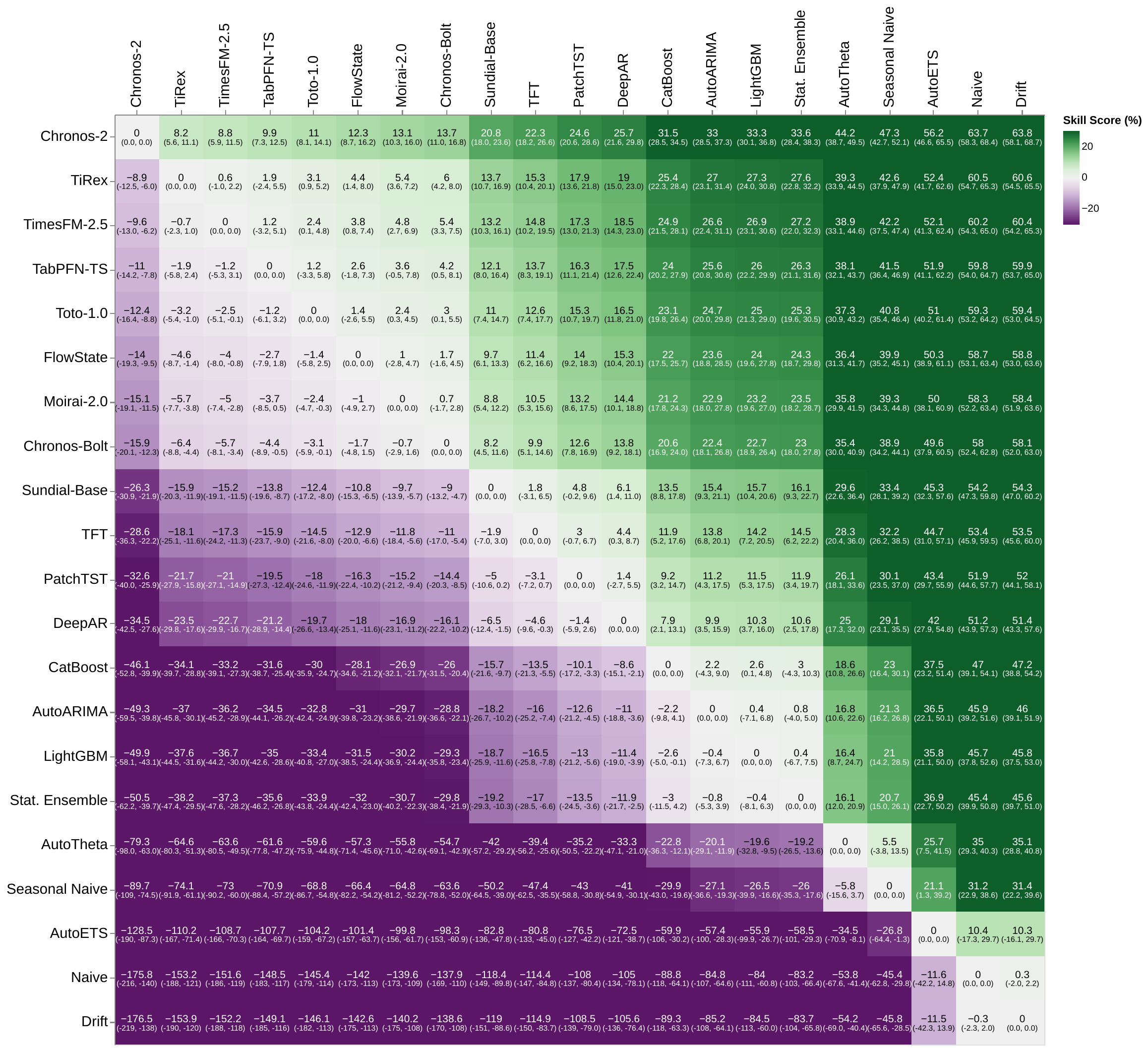}

\caption{Pairwise skill scores $S_{jk}$ (\cref{eq:pairwise-skillscore}) of all models against each other under the scaled quantile loss (SQL) metric on \bench, with 95\% confidence intervals obtained via bootstrapping. Higher values are better.
Note that pairwise skill score is not symmetric, $S_{jk} \ne S_{kj}$.  Best viewed on screen.
}\end{figure}

\begin{figure}[h]
\centering
\includegraphics[width=\textwidth]{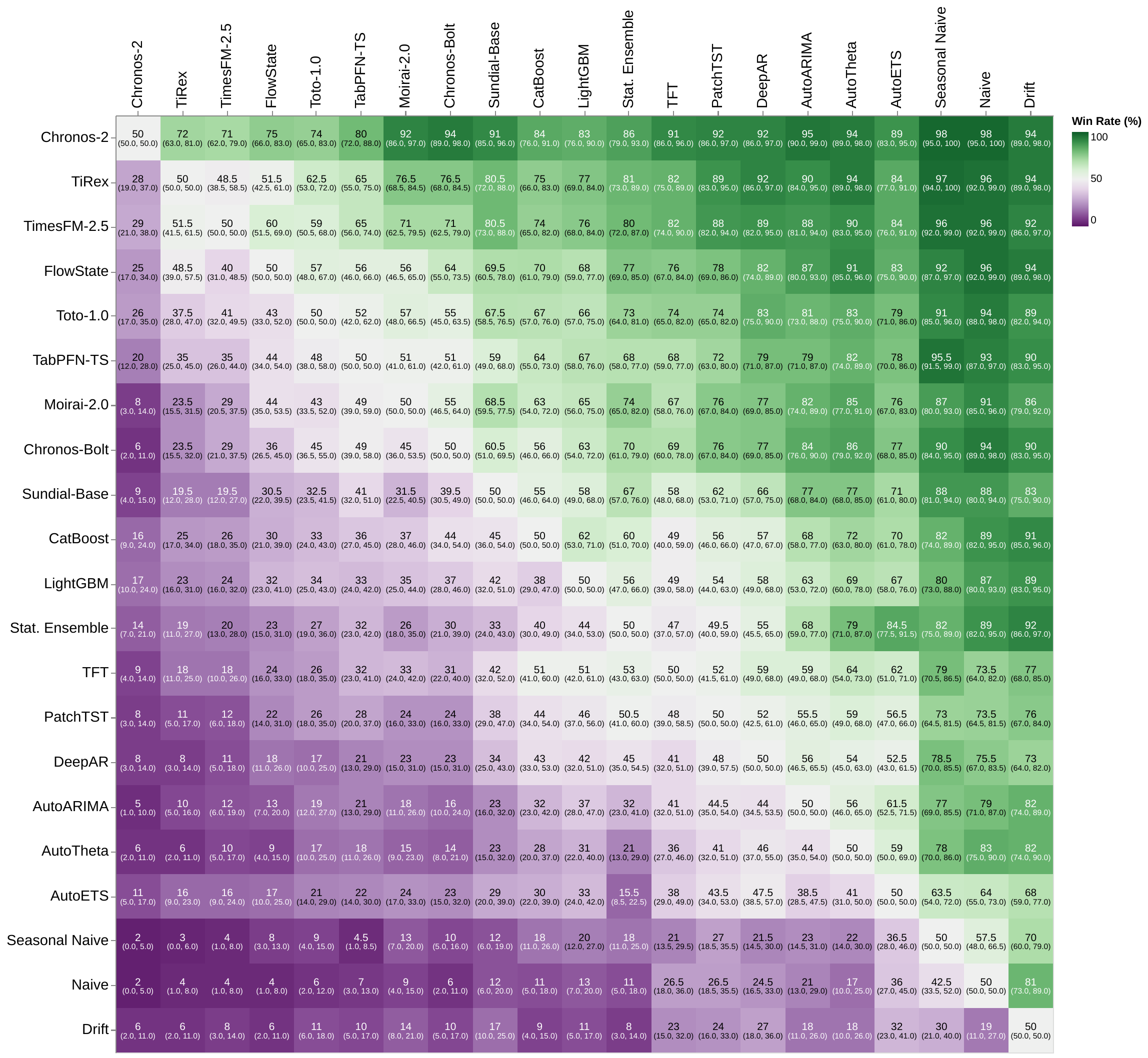}

\caption{Pairwise win rates $W_{jk}$ (\cref{eq:pairwise-winrate}) of all models against each other under the mean absolute scaled error (MASE) metric on \bench, with 95\% confidence intervals obtained via bootstrapping. Higher values are better.  Best viewed on screen.}
\end{figure}

\begin{figure}[h]
\centering
\includegraphics[width=\textwidth]{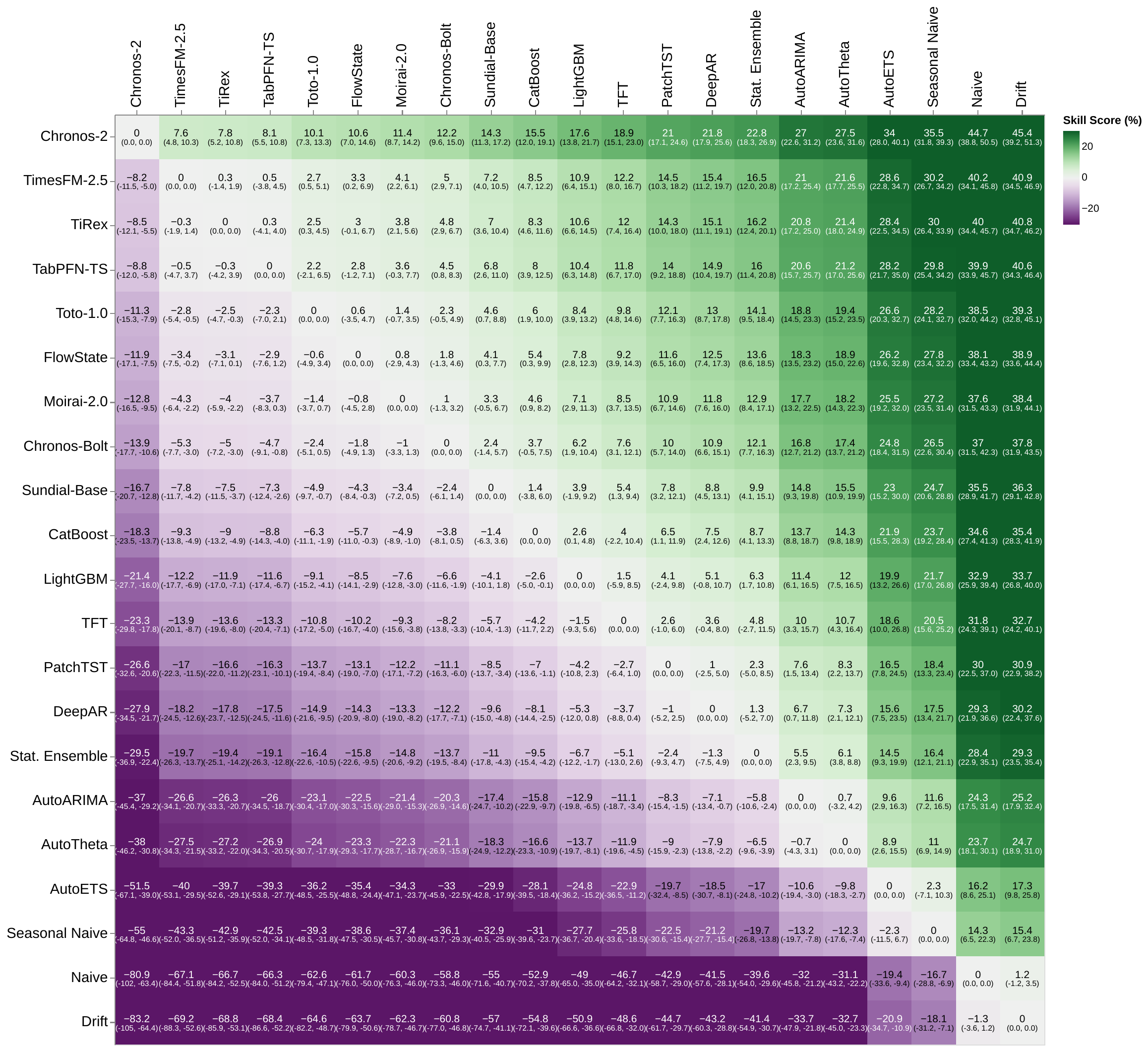}

\caption{Pairwise skill scores $S_{jk}$ (\cref{eq:pairwise-skillscore}) of all models against each other under the mean absolute scaled error (MASE) metric on \bench, with 95\% confidence intervals obtained via bootstrapping. Higher values are better.
Note that pairwise skill score is not symmetric, $S_{jk} \ne S_{kj}$.  Best viewed on screen.
}\end{figure}

\clearpage

\section{\minibench: A representative subset of \bench}
\label{app:mini-bench}

\subsection{Tasks}
\minibench consists of 20 tasks that are representative of the full 100 tasks comprising \bench (\cref{app:tasks}).

\begin{table}[h]
    \centering
\resizebox{\textwidth}{!}{\begin{tabular}{lllrrrrrrrr}
\textbf{Task} & \textbf{Domain} & \textbf{Freq.} & $H$ & $W$ & \textbf{Median length} & \textbf{\# series} & \textbf{\# targets} & \textbf{\# past cov.} & \textbf{\# known cov.} & \textbf{\# static cov.} \\
\midrule
BOOMLET - 1282 & cloud & T & 60 & 20 & 16,384 & 1 & 35 & 0 & 0 & 0 \\
BOOMLET - 1676 & cloud & 30T & 96 & 20 & 10,463 & 1 & 100 & 0 & 0 & 0 \\
BOOMLET - 619 & cloud & T & 60 & 20 & 16,384 & 1 & 52 & 0 & 0 & 0 \\
BizITObs - L2C & cloud & 5T & 288 & 20 & 31,968 & 1 & 7 & 0 & 0 & 0 \\
EPF-NP & energy & H & 24 & 20 & 52,416 & 1 & 1 & 0 & 2 & 0 \\
ETT & energy & 15T & 96 & 20 & 69,680 & 2 & 7 & 0 & 0 & 0 \\
ETT & energy & H & 168 & 20 & 17,420 & 2 & 7 & 0 & 0 & 0 \\
GFC14 & energy & H & 168 & 20 & 17,520 & 1 & 1 & 0 & 1 & 0 \\
Hospital Admissions & healthcare & W & 13 & 16 & 246 & 8 & 1 & 0 & 0 & 0 \\
Hospital Admissions & healthcare & D & 28 & 20 & 1,731 & 8 & 1 & 0 & 0 & 0 \\
Jena Weather & nature & H & 24 & 20 & 8,784 & 1 & 21 & 0 & 0 & 0 \\
M-DENSE & mobility & D & 28 & 10 & 730 & 30 & 1 & 0 & 0 & 0 \\
Rohlik Orders & retail & D & 61 & 5 & 1,197 & 7 & 1 & 9 & 4 & 0 \\
Rossmann & retail & W & 13 & 8 & 133 & 1,115 & 1 & 1 & 4 & 10 \\
Rossmann & retail & D & 48 & 10 & 942 & 1,115 & 1 & 1 & 5 & 10 \\
Solar with Weather & energy & H & 24 & 20 & 49,648 & 1 & 1 & 2 & 7 & 0 \\
UCI Air Quality & nature & H & 168 & 20 & 9,357 & 1 & 4 & 0 & 3 & 0 \\
UK COVID - Nation - Cumulative & healthcare & D & 28 & 20 & 729 & 4 & 3 & 5 & 0 & 0 \\
US Consumption & econ & Y & 5 & 10 & 64 & 31 & 1 & 0 & 0 & 0 \\
World CO2 Emissions & econ & Y & 5 & 9 & 60 & 191 & 1 & 0 & 0 & 0 \\
\end{tabular}
}
    \caption{Tasks included in \minibench.}
\end{table}

\clearpage

\subsection{Evaluation results}
For completeness, we provide the evaluation results on \minibench. The overall ranking of the models, win rates and skill scores align with the scores on the full benchmark reported in \cref{app:extra-results}.

\begin{table*}[h]
    \centering
    \resizebox{\textwidth}{!}{\begin{tabular}{lrrrrr}
\toprule
\textbf{Model} & \textbf{Avg. win rate (\%)} & \textbf{Skill score (\%)} & \textbf{Median runtime / 100 series (s)} & \textbf{Leakage (\%)} & \textbf{\# failures} \\
\midrule
Chronos-2 & 90.3 & 51.1 & 1.2 & 0 & 0 \\
TiRex & 79.5 & 43.4 & 0.4 & 0 & 0 \\
TimesFM-2.5 & 76.2 & 44.0 & 5.2 & 5 & 0 \\
Toto-1.0 & 73.2 & 43.0 & 44.1 & 5 & 0 \\
TabPFN-TS & 70.2 & 47.6 & 270.3 & 0 & 0 \\
FlowState & 70.2 & 41.8 & 3.2 & 5 & 0 \\
Moirai-2.0 & 66.4 & 41.7 & 0.4 & 30 & 0 \\
Chronos-Bolt & 63.6 & 40.4 & 0.5 & 0 & 0 \\
TFT & 53.9 & 35.3 & 1889.7 & 0 & 0 \\
PatchTST & 51.2 & 35.0 & 1517.7 & 0 & 0 \\
Sundial-Base & 49.2 & 37.6 & 8.3 & 0 & 0 \\
DeepAR & 46.4 & 30.9 & 2863.3 & 0 & 0 \\
AutoARIMA & 46.0 & 31.4 & 95.1 & 0 & 1 \\
Stat. Ensemble & 46.0 & 28.1 & 252.3 & 0 & 1 \\
CatBoost & 39.2 & 28.4 & 95.6 & 0 & 0 \\
LightGBM & 35.5 & 24.0 & 7.7 & 0 & 0 \\
AutoETS & 32.2 & -9.4 & 6.2 & 0 & 0 \\
AutoTheta & 25.0 & 8.5 & 5.1 & 0 & 0 \\
Seasonal Naive & 15.5 & 0.0 & 0.8 & 0 & 0 \\
Naive & 11.8 & -36.9 & 0.8 & 0 & 0 \\
Drift & 8.2 & -36.0 & 0.8 & 0 & 0 \\
\bottomrule
\end{tabular}
}
    \caption{Marginal probabilistic forecasting performance of all models (according to the SQL metric) on the \minibench benchmark. The reported metrics are defined in Sections~\ref{sec:aggregation-marginal} and \ref{sec:results}.}
    \label{tab:lb-sql}
\end{table*}

\begin{table*}[h]
    \centering
    \resizebox{\textwidth}{!}{\begin{tabular}{lrrrrr}
\toprule
\textbf{Model} & \textbf{Avg. win rate (\%)} & \textbf{Skill score (\%)} & \textbf{Median runtime / 100 series (s)} & \textbf{Leakage (\%)} & \textbf{\# failures} \\
\midrule
Chronos-2 & 85.5 & 40.4 & 1.2 & 0 & 0 \\
TimesFM-2.5 & 73.0 & 32.3 & 5.2 & 5 & 0 \\
TiRex & 72.8 & 31.4 & 0.4 & 0 & 0 \\
FlowState & 71.0 & 29.9 & 3.2 & 5 & 0 \\
Toto-1.0 & 68.8 & 31.0 & 44.1 & 5 & 0 \\
TabPFN-TS & 65.2 & 37.3 & 270.3 & 0 & 0 \\
Moirai-2.0 & 62.4 & 30.1 & 0.4 & 30 & 0 \\
Chronos-Bolt & 59.4 & 28.4 & 0.5 & 0 & 0 \\
Sundial-Base & 58.5 & 29.8 & 8.3 & 0 & 0 \\
CatBoost & 56.2 & 30.5 & 95.6 & 0 & 0 \\
LightGBM & 50.7 & 26.2 & 7.7 & 0 & 0 \\
TFT & 50.0 & 25.4 & 1889.7 & 0 & 0 \\
Stat. Ensemble & 48.7 & 22.0 & 252.3 & 0 & 1 \\
PatchTST & 43.0 & 23.9 & 1517.7 & 0 & 0 \\
AutoARIMA & 42.5 & 21.3 & 95.1 & 0 & 1 \\
DeepAR & 37.8 & 20.2 & 2863.3 & 0 & 0 \\
AutoTheta & 32.8 & 14.8 & 5.1 & 0 & 0 \\
AutoETS & 32.0 & 2.3 & 6.2 & 0 & 0 \\
Seasonal Naive & 14.0 & 0.0 & 0.8 & 0 & 0 \\
Naive & 13.2 & -17.4 & 0.8 & 0 & 0 \\
Drift & 12.5 & -17.2 & 0.8 & 0 & 0 \\
\bottomrule
\end{tabular}
}
    \caption{Marginal point forecasting performance of all models (according to the MASE metric) on the \minibench benchmark. The reported metrics are defined in Sections~\ref{sec:aggregation-marginal} and \ref{sec:results}.}
    \label{tab:lb-mase}
\end{table*}

\end{document}